\documentclass[twocolumn]{article}

\usepackage{subfigure}

\usepackage[breaklinks=true]{hyperref}

\usepackage{adjustbox}
\usepackage{natbib}

\usepackage[utf8]{inputenc} 
\usepackage[T1]{fontenc}    
\usepackage{hyperref}       
\usepackage{url}            
\usepackage{booktabs}       
\usepackage{amsfonts}       
\usepackage{nicefrac}       
\usepackage{microtype}      

\usepackage{times}
\usepackage{tikz}
\usepackage{soul}
\usepackage{color}
\usepackage{xcolor}
\usepackage{algorithm}
\usepackage{algorithmic}
\usepackage{enumerate}
\usepackage{comment}
\usepackage{amsthm}
\usepackage{amsmath}
\usepackage{url}

\graphicspath{{/figures_simulations/}}

\newcommand{\one}{{\bf I}}  
\newcommand{\Exp}{\mathbf{E}}

\newcommand{\R}{{\mathbb R}}

\newcommand{\cN}{{\cal N}}

\newcommand{\bx}{{\bf x}}
\newcommand{\bX}{{\bf X}}

\newcommand{\bZ}{{\bf Z}}

\newcommand{\bb}{{\bf b}}
\newcommand{\ba}{{\bf a}}
\newcommand{\bc}{{\bf c}}

\newcommand{\sx}{\Sigma_{\bX\bX}}

\newcommand{\sxy}{\Sigma_{\bX Y}}

\newtheorem{Theorem}{Theorem}
\newtheorem{Lemma}{Lemma}
\newtheorem{Definition}{Definition}

\definecolor{lightgray}{gray}{0.85}
\sethlcolor{Dandelion}

\usetikzlibrary{arrows,shapes,plotmarks,positioning}
\usetikzlibrary{decorations.markings}

\tikzset{>=stealth'} 
\tikzstyle{graphnode} = 
   [circle,draw=black,minimum size=22pt,text centered,text
     width=22pt,inner sep=0pt] 
\tikzstyle{var}   =[graphnode,fill=white]
\tikzstyle{vardashed}   =[graphnode,draw=gray,fill=white]
\tikzstyle{obs}   =[graphnode,fill=black,text=white]
\tikzstyle{obsgrey}   =[graphnode,draw=white,fill=lightgray,text=black]
\tikzstyle{par}    =[graphnode,draw=white,fill=red,text=black] 
 \tikzstyle{crucial} =[graphnode,draw=white,fill=yellow,text=black] 
\tikzstyle{fac}   =[rectangle,draw=black,fill=black!25,minimum size=5pt]
\tikzstyle{facprior} =[rectangle,draw=black,fill=black,text=white,minimum size=5pt]
\tikzstyle{edge}  =[draw=white,double=black,very thick,-]
\tikzstyle{blueedge}  =[draw=white,double=blue,very thick,-]
\tikzstyle{rededge}  =[draw=white,double=red,very thick,-]
\tikzstyle{prior} =[rectangle, draw=black, fill=black, minimum size=
5pt, inner sep=0pt]
\tikzstyle{dirprior} = [circle, draw=black, fill=black, minimum
size=5pt, inner sep=0pt]

\tikzstyle{dot_node}=[draw=black,fill=black,shape=circle]

\date{02 March 2018}

\title{Detecting non-causal artifacts \\in multivariate linear regression models}

\author{Dominik Janzing and Bernhard Sch\"olkopf\\
{\small Max Planck Institute for Intelligent Systemes, T\"ubingen, Germany}}

\hypersetup{draft} 
\begin{document}

\maketitle

\begin{abstract}
We consider linear models where $d$ potential causes $X_1,\dots,X_d$ are correlated with one
target quantity $Y$ and propose a method to infer whether the association is causal or whether 
it is an artifact caused by overfitting or hidden common causes.
We employ the idea that in the former case the vector of regression coefficients has `generic' orientation relative to the covariance matrix
$\Sigma_{XX}$  of $X$. Using an ICA based model for confounding, we show that both confounding and
overfitting yield regression vectors 
that concentrate mainly in the space of
low eigenvalues of $\Sigma_{XX}$. 
\end{abstract}

\section{Introduction}

Inferring causal relations from passive observations data has gained increasing interest in machine learning and statistics. Although  reliable causal conclusions can only be drawn from interventional data, the idea of postulating assumptions that render causal inference from passive observations feasible 
becomes more and more accepted. In addition to the more `traditional' {\it causal Markov condition} and {\it causal faithfulness assumption} \cite{Spirtes1993,Pearl2000}, researchers have also stated assumptions that admit causal inference when no conditional statistical independences hold, e.g., \citet{Kano2003,SunLauderdale,Hoyer,Zhang_UAI,Bloebaum17}.  
Each of these method relies on idealized assumptions that rarely hold in practice; nevertheless they can be useful if the methods possess a degree of robustness regarding violation of model assumptions \cite{Mooij2016}. In a similar vein, the present work considers a causal inference problem that becomes solvable only under an idealized model assumption that is certainly debatable. However, it illustrates 
that high-dimensional observations contain a kind of causal information that has not been employed so far. 

We 
assume that we are given a scalar target variable $Y$ that is potentially influenced by a multi-dimensional predictor variable $\bX=(X_1,\dots,X_d)$. Suppose that i.i.d.\ samples from $P_{\bX,Y}$ show that $\bX$ and $Y$ are significantly correlated, but it is unclear whether this is mainly due to the influence of $\bX$ on $Y$ or due to a common cause of $\bX$ and $Y$ (here we assume that
prior knowledge excludes the case where $Y$ causally influences $\bX$, e.g, due to time order). $Y$ may, for instance, be a quantitiave property of a material (e.g., electrical resistence) and $X_j$ some  features
describing its chemical and physical structure.
In biology, $\bX$ and $Y$ could represent information about genotype and phenotype, respectively. 
Note that conditional independences allow to decide which
of the variables $X_j$ influence $Y$, given 
that the association between $\bX$ and $Y$ is unconfounded. The question of unconfoundedness, which we address here, is therefore prior to the former problem. 

 \hypersetup{draft} 
\begin{figure}
\begin{center}
\resizebox{2cm}{!}{
  \begin{tikzpicture}

    \node[obs] at (4,1) (Xc) {$\bX$} ;
    \node[var] at (5,2) (Z) {$\bZ$} edge[->] (Xc) ;  
    \node[obs] at (6,1) (Yc) {$Y$} edge[<-] (Z) edge[<-] (Xc) ;

  \end{tikzpicture}
}
\hfill
\resizebox{2cm}{!}{
  \begin{tikzpicture}

    \node[obs] at (4,1) (Xc) {$\bX$} ;
    \node[var] at (5,2) (Z) {$\bZ$} edge[->] (Xc) ;  
    \node[obs] at (6,1) (Yc) {$Y$} edge[<-] (Z);

  \end{tikzpicture}
}
\hfill
\resizebox{2cm}{!}{
  \begin{tikzpicture}

    \node[obs] at (4,1) (Xc) {$\bX$} ;
    \node[var] at (5,2) (Z) {$\bZ$} edge[->] (Xc) ;  
    \node[obs] at (6,1) (Yc) {$Y$} edge[<-] (Xc) ;

  \end{tikzpicture}
}
\end{center}
\caption{\label{fig:dags} Generic scenario where the statistical  relation between $\bX$ and $Y$ is due to an unobserved confounder $\bZ$ and due to the influence of $\bX$ on $Y$.
The purely confounded (middle) and the purely causal (right) scenarios are obtained as limiting cases where one of the arrows
is negligible. 
}  
\end{figure}
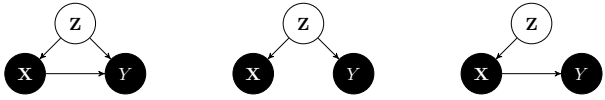

Figure~\ref{fig:dags}, left, visualizes the generic scenario that we consider throughout the paper, where the statistical dependences between $\bX$ and $Y$ are due to 
the influence of $\bX$ on $Y$ and due to the common cause $\bZ$. 
It contains the purely confounded case (middle)
as limiting case where the arrow from $\bX$ to $Y$ is arbitrarily weak. Likewise, the purely causal case is obtained when 
one of the arrows from $\bZ$ gets weak (right).

Our confounder detection is based on 
observing `non-generic' relations between $P_\bX$ and $P_{Y|\bX}$ \cite{multivariateConfound}.
 We thus follow the abstract principle of independent mechanisms \cite{causality_book}, stating that for the purely causal relation $X\to Y$
of two arbitrary variables variables  $X,Y$, the marginal $P_X$ 
 and the conditional $P_{Y|X}$
do not contain information about each other (where `information' needs to be further specified). 
The present paper contains the following novel contributions:

$\bullet$ We allow for multi-dimensional confounders. In contrast, the entire analysis of \citet{multivariateConfound} is restricted to the case of a one-dimensional confounder, and cannot be extended using the methods presented in that work.

$\bullet$ We show that the multivariate setting permits an analysis which is significantly simpler, and also the `dependences' between $P_\bX$ and $P_{Y|\bX}$ become simple. 

$\bullet$ We derive a statistical test for non-confounding based on our model assumptions.

$\bullet$ We show that for our model, overfitting generates the same kind of dependences between
$P_\bX$ and $P_{Y|\bX}$ as confounding. This suggests a subtle link between 
regularization and the correction of confounding. 
One may conjecture, for instance, that models with `independent' $P_\bX$ and $P_{Y|\bX}$
have better chances to generalize to future data points as well as to related 
data sets from other domains (including interventional data), cf.\ also \citet{anticausal}.

It may sound counter-intuitive that the multivariate case can be simpler than the scalar case, but
our derivations are based on a certain notion of 
genericity of the multivariate confounder which does not necessarily hold for the scalar case, although our experiments will also include data with scalar confounding.

\section{Model for confounding with uncorrelated sources}

Our model for the influence of the high-dimensional 
common cause $\bZ$ on both $\bX$ and $Y$ is inspired by
Independent Component Analysis (ICA) \cite{ICA}. Let $\bZ$ consist 
of $\ell\geq d$ independent sources\footnote{In contrast to ICA, however, it is actually enough that the sources are uncorrelated.} $Z_1,\dots,Z_\ell$, each having unit variance. They influence 
$\bX$ via a mixing matrix $M$ and $Y$ via a mixing vector $\bc$,
as shown in Figure~\ref{fig:sourcemodel}. 
Explicitly, the structural equations relating $\bZ,\bX,Y$ thus read:
\begin{eqnarray}
\bX &=& M \bZ \\
Y   &=& \ba^T \bX + \bc^T \bZ, 
\end{eqnarray}
where $M$ is a $d\times \ell$ matrix and $\ba$ are $\bc$ are
vectors in $\R^d$ and $\R^\ell$, respectively. 
\begin{figure}
\begin{center}
\resizebox{4cm}{!}{
  \begin{tikzpicture}

    \node[obs] at (4,1) (Xc) {$\bX$} ;
    \node[dot_node,label={above:$Z_1$}] at (4,3.5) (S1) {} edge[->] (Xc) ;  
     \node[dot_node,label={above:$Z_2$}] at (4.5,3.5) (S2) {} edge[->] (Xc) ;  
      \node[circle,draw=none] at (5.25,3.5) (dots) {$\cdots$}; 
      \node[dot_node,label={above:$Z_\ell$}] at (6,3.5) (Sl) {} edge[->] (Xc) ;

    \node[obs] at (6,1) (Yc) {$Y$} edge[<-] (S1) edge[<-] (S2) edge[<-] (Sl) edge[<-] (Xc)  ;

    \node[anchor=center] at (5,0.8) {$\ba$ }; 
    \node[anchor=center] at (4.5,2) {$M$}; 
   \node[anchor=center] at (5.8,2) {$\bc$}; 
  \end{tikzpicture}
}
\end{center}
\caption{\label{fig:sourcemodel} Model of a confounded influence of $\bX$ on $Y$ where the hidden common causes are independent sources
that influence $\bX$ and $Y$ at the same time. 
}  
\end{figure}
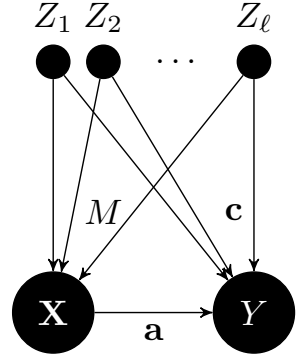
The model induces the following correlations of the observed variables $\bX$ and $Y$:
\begin{eqnarray}
\sx &=& M{\bf I} M^T = MM^T \\
\sxy &=&  MM^T \ba + M \bc,
\end{eqnarray}
where $\one$ denotes the identity matrix. 
While $\ba$ describes the {\it causal influence} of $\bX$ on $Y$, formally regressing $Y$ on $\bX$ yields
\begin{equation}\label{eq:aprime}
\ba':=\sx^{-1} \sxy = \ba + M^{-T} \bc,
\end{equation}
where $M^{-T}$ denotes the transpose of the pseudoinverse of
$M$.  
The vector $\ba'$ describes how the distribution of $Y$ is shifted 
when one {\it observes} that $\bX$ has attained a particular $d$-tuple, while $\ba$ describes how it changes when $\bX$ is {\it set} to some $d$-tuple by an {\it intervention}.
In Pearl's leanguage \cite{Pearl2000}, $\ba'$ vs. $\ba$ 
describe the difference between $p(y|\bx)$ and 
$p(y|do(\bx))$ for our particular linear model. 
\citet{multivariateConfound} define the strength of confounding by
\begin{equation}\label{eq:beta}
\beta := \frac{\|\ba' - \ba\|^2}{\|\ba\|^2 +\|\ba'-\ba\|^2} \in [0,1], 
\end{equation}
which is $0$ for the purely causal case $\ba'=\ba$  and $1$
for the purely confounded case $\ba=0$, which
is already a nice property. 
To further justify this definition, they argue that 
the vectors $\ba$ and $\ba'-\ba=M^{-T}\bc$ are close 
to orthogonal in high dimensions if $\ba$ is drawn independently from $M^{-T}\bc$ from a rotation-invariant distribution. Thus, the denominator is cose to $\|\ba'\|^2$ and $\beta$ is the fraction of squared length of $\ba'$ that can be attributed to the confounder.  
Following \citet{multivariateConfound} we define the estimation of $\beta$ from $P_{\bX,Y}$ as our crucial task. 

The essential assumption that we add now is that the vectors $\bc$ and $\ba$ are randomly drawn from a rotation invariant prior. One can already guess from \eqref{eq:aprime}  what kind of
`non-generic' relation
the vector $\ba'$ then satisfies together with $\sx$: whenever 
$\ba'$ is dominated by the confounding term $M^{-T}\bc$
it tends to be mainly located in the eigenspaces of $\sx$
corresponding to small eigenvalues. The formal analysis is detailed below, but intuitively speaking,  multiplication with $M^{-T}$ amplifies the components corresponding to 
small singular values of $M$ and thus to small eigenvalues of $\sx =MM^T$. 

To formally explore this idea
we first introduce the following generating model for
$\ba$ and $\bc$ and hence for $\ba'$:
\begin{Definition}[ICA based confounding model]
\label{def:gen}
First
sample each component of $\ba$ from a Gaussian with zero mean and standard deviation $\sigma_a$, then sample each component of $\bc$ from a Gaussian with standard deviation $\sigma_c$.
Compute $\ba'$ as in \eqref{eq:aprime}, where $M$
is some given $d\times \ell$-matrix. 
\end{Definition}
Together with $M$, the parameters $\sigma_a$ and $\sigma_c$ determine 
the expected value of $\beta$, but actually only their ratio matters
because $\beta$ depends only on the relative squared lengths of vectors.

\section{Estimating the ratio of $\sigma_a$ and $\sigma_c$}

We now describe how to infer the ratio of $\sigma_a$ and $\sigma_c$
as an intermediate step for inferring $\beta$.
We could infer both parameters 
by maximizing the likelihood of $\ba'$ given 
 our generating model in Definition~\ref{def:gen}
 if we knew $M$ and $\ell$. 
  Unfortunately, we only know $MM^T=\sx$ and $d$.
However, 
 we can construct an equivalent generating model for $\ba'$
 that contains only these observed elements:
\begin{Definition}[alternative generating model for $\ba'$]
\label{def:genOneVec}
Generate $\bb \in \R^d$  by drawing each component from a standard Gaussian.  Set
\[
\ba':= \sqrt{ \sigma^2_a \one + \sigma^2_c \sx^{-1} } \bb. 
\]
\end{Definition}
\begin{Theorem}[equivalence of models]
The model in Definition~\ref{def:genOneVec} generates vectors $\ba'$ with the same distribution as in Definition~\ref{def:gen}. 
\end{Theorem}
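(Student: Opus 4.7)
The plan is to observe that under Definition~\ref{def:gen}, the vector $\ba' = \ba + M^{-T}\bc$ is a linear combination of two independent zero-mean Gaussian vectors, hence it is itself a zero-mean Gaussian in $\R^d$; under Definition~\ref{def:genOneVec}, $\ba'$ is likewise a linear image of a standard Gaussian, hence zero-mean Gaussian. Since a multivariate Gaussian is determined by its mean and covariance, it suffices to verify that the two covariance matrices coincide.

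First, I would compute the covariance under Definition~\ref{def:gen}. Using independence of $\ba$ and $\bc$, and the fact that each has scalar covariance,
\begin{equation*}
\cov(\ba') \;=\; \sigma_a^2 \one + M^{-T}(\sigma_c^2 \one)M^{-1} \;=\; \sigma_a^2 \one + \sigma_c^2 \, M^{-T}M^{-1}.
\end{equation*}
The key algebraic identity I need here is $M^{-T}M^{-1} = (MM^T)^{-1} = \sx^{-1}$. For a square invertible $M$ this is immediate; for the general $d\times\ell$ case with $\ell \geq d$ and $M$ of full row rank, it follows from the explicit right pseudoinverse $M^+ = M^T(MM^T)^{-1}$, whose transpose satisfies $(M^+)^T M^+ = (MM^T)^{-1} M M^T (MM^T)^{-1} = \sx^{-1}$. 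This rank condition is exactly what is implicitly assumed when $\sx^{-1}$ appears in \eqref{eq:aprime}, so it is available for free.

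Next, I would compute the covariance under Definition~\ref{def:genOneVec}. Writing $A := \sqrt{\sigma_a^2 \one + \sigma_c^2 \sx^{-1}}$ for the symmetric positive semidefinite square root (well-defined since $\sx^{-1}$ is positive definite), the definition $\ba' = A\bb$ with $\bb \sim \cN(0,\one)$ gives
\begin{equation*}
\cov(\ba') \;=\; A\,\one\, A^T \;=\; A^2 \;=\; \sigma_a^2 \one + \sigma_c^2 \sx^{-1}.
\end{equation*}
Comparing with the previous paragraph, the two covariances agree, so the two distributions of $\ba'$ coincide.

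I do not expect real obstacles: the argument is essentially the observation that ``Gaussian + Gaussian'' is Gaussian with additive covariances, plus the pseudoinverse identity. The only step that deserves a careful sentence is the justification of $M^{-T}M^{-1}=\sx^{-1}$ and of the existence of the symmetric square root, both of which reduce to the standing full-rank assumption on $M$ implicit in \eqref{eq:aprime}.
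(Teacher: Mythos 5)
Your proof is correct, but it takes a genuinely different (and more elementary) route than the paper. You argue by moment matching: under Definition~\ref{def:gen}, $\ba'=\ba+M^{-T}\bc$ is a zero-mean Gaussian with covariance $\sigma_a^2\one+\sigma_c^2 M^{-T}M^{-1}=\sigma_a^2\one+\sigma_c^2\sx^{-1}$, and under Definition~\ref{def:genOneVec} it is a zero-mean Gaussian with the same covariance, so the two laws coincide because a Gaussian is determined by its first two moments. The paper instead builds an explicit pathwise identification: it stacks $\ba/\sigma_a$ and $\bc/\sigma_c$ into a standard Gaussian $\bb'\in\R^{d+\ell}$, writes $\ba'=K_{\sigma_a,\sigma_c}\bb'$ with $K_{\sigma_a,\sigma_c}=(\sigma_a\one\ \ \sigma_c M^{-T})$, takes the right polar decomposition $K=\sqrt{KK^T}\,V$, and shows that $\bb:=V\bb'$ is again a standard Gaussian in $\R^d$, with $KK^T=\sigma_a^2\one+\sigma_c^2\sx^{-1}$. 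The central computation --- the identity $M^{-T}M^{-1}=(MM^T)^{-1}=\sx^{-1}$, which you justify carefully via the right pseudoinverse $M^+=M^T(MM^T)^{-1}$ --- is the same in both arguments; what differs is the wrapper around it. Your version is shorter, and it sidesteps the paper's slightly delicate factorization of the partial isometry $V$ as $W_{\sigma_a,\sigma_c}Q$ (which as literally stated presumes the initial space of $V$ is the first $d$ coordinates; the conclusion only needs $VV^T=\one$). The paper's version buys an explicit coupling exhibiting $\bb$ as a deterministic function of $(\ba,\bc)$, which is conceptually informative but not needed for the distributional claim. No gaps.
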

\begin{proof} 
First define the $d \times (d+\ell)$-matrix
\[
K_{\sigma_a,\sigma_b} :=\left( \begin{array}{cc} \sigma_a \one &  \sigma_c M^{-T} \end{array} \right).
\]
We can then rewrite $\ba'$ in Definition~\ref{def:gen} as
\[
\ba' = K_{\sigma_a,\sigma_c} \bb',
\]
with
\[
\bb':= \left(\begin{array}{c}\ba /\sigma_a \\ \bc/\sigma_c \end{array}\right).
\] 
Let 
\[
K_{\sigma_a,\sigma_c} = \sqrt{K_{\sigma_a,\sigma_c}
K^T_{\sigma_a,\sigma_c}} V_{\sigma_a,\sigma_c}
\]
be the right polar decomposition of $K_{\sigma_a,\sigma_c}$, where
$V_{\sigma_a,\sigma_c}$ is a partial isometry
from $\R^{d+\ell}$ to $\R^d$. It can be written
as
\[
V_{\sigma_a,\sigma_c} = W_{\sigma_a,\sigma_c} Q,
\]
where $W_{\sigma_a,\sigma_c}$ is an orthogonal
$d\times d$-matrix and $Q: \R^{d+\ell} \to \R^d$ is the projection that annihilates the last $\ell$
components of a vector. We then get
\[
\ba' = \sqrt{K_{\sigma_a,\sigma_c} K^T_{\sigma_a,\sigma_c} } W_{\sigma_a,\sigma_c}
Q \bb'.
\]
Since the $d+\ell$ entries of $\bb'$ are drawn from independent standard Gaussians, the $d$ entries of
$Q \bb'$ are also  standard Gaussians. This  distribution of entries is invariant under orthogonal maps, hence the entries of 
\[
\bb:= W_{\sigma_a,\sigma_c} Q \bb'
\]
are also independent standard Gaussians.
We have
\[
\sqrt{K_{\sigma_a,\sigma_c} K^T_{\sigma_a,\sigma_c}}
= \sqrt{\sigma_a^2 \one + \sigma_c^2 \sx^{-1}},
\]
Hence,
\[
\ba' =  \sqrt{\sigma_a^2 \one + \sigma_c^2 \sx^{-1}} \bb.
\]
\end{proof}
Note that the length of $\ba'$ is irrelevant for $\beta$. We thus consider  $\ba'/\|\ba'\|$ and infer only the quotient $\theta:=\sigma_c^2/\sigma_a^2$. 
 We therefore introduce the matrix
\begin{equation}\label{eq:Stheta}
R_\theta := \one + \theta \sx^{-1},
\end{equation} 
and conclude that our generating models for $\ba'$ induces a distribution for the directions $\ba'/\|\ba'\|$ that is the image of the uniform distribution on the unit sphere (i.e. the Haar measure for the orthogonal group) under the map
\[
\bb \mapsto \frac{\sqrt{R_\theta} \bb}{\|\sqrt{R_\theta} \bb\|}.
\]
To compute this distribution,
we use the following result shown in the appendix:  
\begin{Lemma}[distributions of directions induced by a matrix]\label{lem:Phi}
Let $A$ be an invertible real-valued $d\times d$-matrix. 
Define the map $\Phi: S^{d-1} \rightarrow S^{d-1}$ by 
\[
\Phi(v) := \frac{1}{\|Av\|} Av. 
\]
Then the image of the uniform distribution on $S^{d-1}$ under $\Phi$
has the following density with respect to the uniform distribution:
\begin{equation}\label{eq:density}
p(\tilde{v})= \frac{1}{\det(A) \|A^{-1}\tilde{v}\|^d}. 
\end{equation}
\end{Lemma}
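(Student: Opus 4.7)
The plan is to pull the spherical measure computation back to a Lebesgue volume computation in $\R^d$, using the standard correspondence between sets on the sphere and their cones in the ball, and then apply the linear change-of-variables formula to $A^{-1}$. Let $\mu$ be the normalized uniform measure on $S^{d-1}$, let $B^d$ be the closed unit ball, and for $U \subseteq S^{d-1}$ let $C_U := \{r v : r \geq 0,\, v \in U\}$ be the cone over $U$. The first building block is the identity
\[
\mu(U) \;=\; \frac{\mathrm{vol}(C_U \cap B^d)}{\mathrm{vol}(B^d)},
\]
which I would establish by polar-coordinate integration (the radial integral contributes $\int_0^1 r^{d-1}\,dr = 1/d$, and this cancels with the ratio $\mathrm{area}(S^{d-1})/\mathrm{vol}(B^d) = d$).

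Next, for a measurable $\tilde U \subseteq S^{d-1}$, I would observe that $\Phi^{-1}(\tilde U) = A^{-1}(C_{\tilde U}) \cap S^{d-1}$, since $Av/\|Av\| \in \tilde U$ if and only if $v$ lies on a ray that $A$ sends into the cone over $\tilde U$; because $A^{-1}$ maps rays through the origin to rays through the origin, $C_{\Phi^{-1}(\tilde U)} = A^{-1}(C_{\tilde U})$. Combining this with the cone-ball identity and the fact that $A$ is a bijection (so $A^{-1}(C_{\tilde U}) \cap B^d = A^{-1}(C_{\tilde U} \cap A(B^d))$) gives
\[
\mu(\Phi^{-1}(\tilde U))
\;=\; \frac{1}{\mathrm{vol}(B^d)\,|\det A|}\, \mathrm{vol}\!\bigl(C_{\tilde U} \cap A(B^d)\bigr).
\]

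To finish, I would parametrize the last volume in polar coordinates. The image ellipsoid is $A(B^d) = \{u : \|A^{-1}u\| \leq 1\}$, so the ray $\{r\tilde v : r \geq 0\}$ through $\tilde v \in S^{d-1}$ meets $A(B^d)$ precisely for $0 \leq r \leq 1/\|A^{-1}\tilde v\|$. Hence
\[
\mathrm{vol}\!\bigl(C_{\tilde U} \cap A(B^d)\bigr)
\;=\; \int_{\tilde U} \int_0^{1/\|A^{-1}\tilde v\|} r^{d-1}\, dr\, d\sigma(\tilde v)
\;=\; \frac{1}{d}\int_{\tilde U} \frac{d\sigma(\tilde v)}{\|A^{-1}\tilde v\|^d},
\]
where $\sigma$ is unnormalized surface measure. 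Dividing by $\mathrm{vol}(B^d)$ converts $d\sigma/d$ into $d\mu$, yielding
\[
\mu(\Phi^{-1}(\tilde U)) \;=\; \int_{\tilde U} \frac{1}{|\det A|\,\|A^{-1}\tilde v\|^d}\, d\mu(\tilde v),
\]
which is exactly the claimed density (with $|\det A|$ in place of $\det A$, an absolute value that is clearly intended).

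There is no real analytic obstacle; the only subtlety is bookkeeping between unnormalized surface measure, normalized $\mu$, and ball volume, together with the cleanest way to express that $A^{-1}$ commutes with taking cones. I considered an alternative approach via the spherical Jacobian of $\Phi$, but that requires orthogonal decomposition of the differential and is substantially messier, which is why the cone/volume route is preferable.
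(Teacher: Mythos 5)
Your argument is correct, and it takes a genuinely different route from the paper. The paper proves the lemma by differential geometry on the sphere: it differentiates $\Phi$ along curves, identifies the Jacobian $\widehat{D\Phi}(v) = g(Av)\, U_{\tilde v}^T A U_v$ with respect to orthonormal bases of the tangent spaces at $v$ and $\tilde v$, and then evaluates $\det(U_{\tilde v}^T A U_v)$ via a block-triangular extension of $A$ to the full orthogonal frames $(U_v, v)$ and $(U_{\tilde v},\tilde v)$, which yields $\det(\tilde A) = \det(A)\,\|A^{-1}\tilde v\|$ and hence the density $|\det\widehat{D\Phi}|^{-1}$. Your cone-and-volume computation reaches the same formula using only the linear change-of-variables theorem and polar coordinates: the identity $\mu(U)=\mathrm{vol}(C_U\cap B^d)/\mathrm{vol}(B^d)$, the fact that linear maps commute with taking cones so that $C_{\Phi^{-1}(\tilde U)}=A^{-1}(C_{\tilde U})$, and the radial integral over the ellipsoid $A(B^d)=\{u:\|A^{-1}u\|\leq 1\}$, all of which check out. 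What your approach buys is elementarity and robustness: it needs no tangent-space bookkeeping, no block-determinant identity, and no pointwise invertibility or smoothness of $\Phi$ beyond what the linear structure already provides, and it automatically produces $|\det A|$ rather than $\det A$ (the paper's statement omits the absolute value, which is harmless in its application since there $A=\sqrt{R_\theta}$ is positive definite). What the paper's route buys is the explicit spherical Jacobian of $\Phi$ itself, which is a slightly stronger local statement than the density of the pushforward measure, though it is not needed elsewhere in the paper.
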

We now apply Lemma~\ref{lem:Phi} to $A:=\sqrt{R_\theta}$
as defined by \eqref{eq:Stheta} and 
obtain
\begin{equation}\label{eq:ptheta}
p_\theta (\tilde{v}) = \frac{1}{|\det \sqrt{R_\theta}| \left\|\sqrt{R^{-1}_\theta} \tilde{v}\right\|^d}.
\end{equation}
Using
\[
|\det \sqrt{R_\theta}| = \sqrt{ \det R_\theta }
\] 
we can rewrite \eqref{eq:ptheta} as 
\[
p_\theta  (\tilde{v}) = \frac{1}{\sqrt{\det R_\theta}
\| \langle \tilde{v}, (1+\theta\sx^{-1})^{-1} \tilde{v} \rangle \|^{d/2}},
\]
which proves the following theorem:
\begin{Theorem}[density of directions]
The generating model in Definition~\ref{def:gen} generates vectors $\ba'$ whose distribution of unit vectors $\tilde{v}:=\ba'/\|\ba'\|$ 
has the following log density with respect to the uniform distribution on the sphere:
\begin{eqnarray}
&&\log p_\theta (\tilde{v}) \nonumber \\
&=& 
\frac{1}{2}\left[ \log \det R_\theta  - d \log \langle \tilde{v}, R_\theta^{-1} \tilde{v}\rangle \right]. \label{eq:likeli}
\end{eqnarray}
\end{Theorem}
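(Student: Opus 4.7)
The plan is to chain the preceding equivalence-of-models theorem with Lemma~\ref{lem:Phi}. By that theorem, the vector $\ba'$ produced by Definition~\ref{def:gen} has the same distribution as $\sqrt{\sigma_a^2\one+\sigma_c^2\sx^{-1}}\,\bb$, where $\bb$ has i.i.d.\ standard Gaussian entries. The first step is to factor out $\sigma_a$ and introduce $\theta=\sigma_c^2/\sigma_a^2$, writing this as $\sigma_a\sqrt{R_\theta}\,\bb$ with $R_\theta$ as in \eqref{eq:Stheta}. Because the unit direction is invariant under multiplication by a positive scalar, and because $\bb/\|\bb\|$ is uniformly distributed on $S^{d-1}$ by rotation invariance of the isotropic Gaussian, the direction $\tilde{v}=\ba'/\|\ba'\|$ is equidistributed with $\sqrt{R_\theta}\,u/\|\sqrt{R_\theta}\,u\|$ for $u$ uniform on $S^{d-1}$.

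The second step is a direct application of Lemma~\ref{lem:Phi} with $A:=\sqrt{R_\theta}$. One first observes that $R_\theta=\one+\theta\sx^{-1}$ is symmetric positive definite, as $\sx$ is assumed invertible and $\theta\geq 0$, so $\sqrt{R_\theta}$ is well-defined, symmetric, and invertible, making the lemma applicable. The lemma then yields exactly the density written in \eqref{eq:ptheta}.

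The third step consists of two elementary identities. Because $R_\theta$ has positive eigenvalues, $|\det\sqrt{R_\theta}|=\sqrt{\det R_\theta}$; and because $\sqrt{R_\theta^{-1}}$ is symmetric, $\|\sqrt{R_\theta^{-1}}\tilde{v}\|^2=\langle\tilde{v},R_\theta^{-1}\tilde{v}\rangle$. Substituting these into \eqref{eq:ptheta} and taking logarithms produces the claimed log-density up to the sign bookkeeping shown in the statement.

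There is essentially no technical obstacle once the previous theorem and Lemma~\ref{lem:Phi} are in hand; the proof is a direct chain. The only point that requires care is the reduction from two parameters $(\sigma_a,\sigma_c)$ to the single parameter $\theta$: this is legitimate precisely because normalization discards the overall positive scale $\sigma_a$, and one must verify that nothing else in the distribution of $\tilde v$ depends on $\sigma_a$ alone. After that, the remaining manipulations are purely algebraic.
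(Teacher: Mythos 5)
Your proposal is correct and follows essentially the same route as the paper: invoke the equivalence-of-models theorem to reduce the direction of $\ba'$ to the image of a uniform point on $S^{d-1}$ under $\bb\mapsto\sqrt{R_\theta}\,\bb/\|\sqrt{R_\theta}\,\bb\|$, apply Lemma~\ref{lem:Phi} with $A=\sqrt{R_\theta}$, and finish with the identities $|\det\sqrt{R_\theta}|=\sqrt{\det R_\theta}$ and $\|\sqrt{R_\theta^{-1}}\tilde{v}\|^2=\langle\tilde{v},R_\theta^{-1}\tilde{v}\rangle$. Your explicit remark that only the ratio $\theta=\sigma_c^2/\sigma_a^2$ survives normalization is a point the paper also makes, just before introducing $R_\theta$.
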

Given sufficiently many samples $\ba'$ generated with the same $\theta$, we can certainly infer $\theta$ by maximizing \eqref{eq:likeli}. Remarkably, 
we can infer the loglikelihood already from a single instance for large $d$ under appropriate conditions:
\begin{Theorem}[concentration of measure]\label{thm:conc}
Let $\tilde{v}$ be drawn from $p_{\theta'}$. Then for sufficiently small $\epsilon$ we have 
\[
\left|\log p_\theta (\tilde{v}) - 
\frac{1}{2}\left[\log \det R_\theta- \log \frac{\tau (R_{\theta'} R^{-1}_{\theta})}{ \tau (R_\theta')}  \right] \right|   \leq \epsilon
\]
with probability at least
\[
1 - \frac{1}{d \epsilon^2} \left( \frac{\tau (R_\theta^2 R_{\theta'}^{-2})}{\tau (R_\theta R_{\theta'})^2} 
+  \frac{\tau (R_{\theta'}^{2})}{\tau (R_{\theta'})^2} \right), 
\]
 where $\tau ():=\frac{1}{d}{\rm tr} (.)$ denotes the
renormalized trace. 
\end{Theorem}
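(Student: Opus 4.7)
The plan is to invoke the equivalent generating model of Theorem~1: when $\tilde v$ is drawn from $p_{\theta'}$ it is distributed as the direction of $\sqrt{R_{\theta'}}\bb$ for a standard Gaussian $\bb\in\R^d$. Since $R_\theta=\one+\theta\sx^{-1}$ and $R_{\theta'}=\one+\theta'\sx^{-1}$ are both polynomials in $\sx^{-1}$, they commute, so the only non-deterministic ingredient of $\log p_\theta(\tilde v)$---namely $\langle\tilde v,R_\theta^{-1}\tilde v\rangle$---collapses into the scalar ratio
\[
\langle\tilde v,R_\theta^{-1}\tilde v\rangle \;=\; \frac{\bb^T R_{\theta'}R_\theta^{-1}\bb}{\bb^T R_{\theta'}\bb}\;=:\;\frac{N}{D}.
\]
Only the fluctuation of $\log(N/D)$ must be controlled; the deterministic $\tfrac{1}{2}\log\det R_\theta$ is the same on both sides of the claimed inequality.

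The second step is a direct moment computation via the standard identities $\Exp[\bb^T A\bb]=\mathrm{tr}(A)$ and $\var(\bb^T A\bb)=2\,\mathrm{tr}(A^2)$ for symmetric $A$. Together with commutativity of $R_\theta$ and $R_{\theta'}$ these give $\Exp[N]=d\,\tau(R_{\theta'}R_\theta^{-1})$, $\Exp[D]=d\,\tau(R_{\theta'})$, with relative variances
\[
\frac{\var(N)}{\Exp[N]^2}=\frac{2}{d}\cdot\frac{\tau(R_{\theta'}^2R_\theta^{-2})}{\tau(R_{\theta'}R_\theta^{-1})^2},\qquad \frac{\var(D)}{\Exp[D]^2}=\frac{2}{d}\cdot\frac{\tau(R_{\theta'}^2)}{\tau(R_{\theta'})^2}.
\]
The ratio $\Exp[N]/\Exp[D]=\tau(R_{\theta'}R_\theta^{-1})/\tau(R_{\theta'})$ is precisely the limiting value around which the theorem claims concentration, and the two relative variances above are exactly the trace ratios that should appear in the advertised failure probability.

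I would then apply Chebyshev's inequality to each relative deviation $N/\Exp[N]-1$ and $D/\Exp[D]-1$ at level $\epsilon$ (up to constants) and combine them by a union bound; this gives a failure event of probability at most $\tfrac{1}{d\epsilon^2}$ times the sum of the two trace ratios above, matching the theorem. On the good event both relative deviations are of order $\epsilon$, so a first-order Taylor expansion $\log(1+x)\approx x$---legitimate because $\epsilon$ is assumed small---yields $|\log(N/D)-\log(\Exp[N]/\Exp[D])|=O(\epsilon)$. Substituting into the log density $\log p_\theta(\tilde v)=\tfrac{1}{2}\log\det R_\theta-\tfrac{d}{2}\log\langle\tilde v,R_\theta^{-1}\tilde v\rangle$ then delivers the stated bound.

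The principal obstacle is the nonlinearity of the logarithm: Chebyshev furnishes component-wise control of $N$ and $D$, whereas the object we ultimately need to bound is $\log(N/D)$. The two relative-deviation bounds must be threaded through the quotient and then through $\log$ without losing the precise trace expressions, and this is exactly the role of the ``for sufficiently small $\epsilon$'' hypothesis, which guarantees that the linear Taylor term dominates the quadratic remainder. A secondary bookkeeping point is keeping track of how the factor $d/2$ in front of $\log\langle\tilde v,R_\theta^{-1}\tilde v\rangle$ interacts with the $\frac{1}{d}$ in the relative variances, since it is this combination that produces the $\frac{1}{d\epsilon^2}$ scaling of the failure probability.
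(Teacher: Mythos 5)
Your route is essentially the paper's own: represent $\tilde v$ as the normalized image of a standard Gaussian $\bb$ under $\sqrt{R_{\theta'}}$, write $\langle\tilde v, R_\theta^{-1}\tilde v\rangle$ as a ratio $N/D$ of two commuting Gaussian quadratic forms, apply Chebyshev to numerator and denominator separately (the paper does this coordinate-wise in the eigenbasis of $\sx$, which is the same computation), combine by a union bound, and pass through the logarithm using the smallness of $\epsilon$. Your trace ratios $\tau(R_{\theta'}^2R_\theta^{-2})/\tau(R_{\theta'}R_\theta^{-1})^2$ and $\tau(R_{\theta'}^2)/\tau(R_{\theta'})^2$ are the ones the paper's own Chebyshev step actually produces; the expressions printed in the theorem (e.g.\ the denominator $\tau(R_\theta R_{\theta'})^2$) appear to be typos.

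The one substantive problem is the point you dismiss as ``secondary bookkeeping.'' Taking \eqref{eq:likeli} at face value, $\log p_\theta(\tilde v)$ contains $-\tfrac{d}{2}\log\langle\tilde v, R_\theta^{-1}\tilde v\rangle$, whereas the centering constant in the theorem contains $-\tfrac12\log\bigl(\tau(R_{\theta'}R_\theta^{-1})/\tau(R_{\theta'})\bigr)$ \emph{without} the factor $d$; since $\langle\tilde v,R_\theta^{-1}\tilde v\rangle$ concentrates at $\tau(R_{\theta'}R_\theta^{-1})/\tau(R_{\theta'})$, which is generically not $1$, the difference between the two sides is of order $d$ and cannot be $\leq\epsilon$. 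And if you do keep the $d/2$ and demand accuracy $\epsilon$ for the full log-likelihood, you need $|\log(N/D)-\log(\Exp[N]/\Exp[D])|\leq 2\epsilon/d$, so Chebyshev must be applied at level $\epsilon/d$ and the failure probability comes out of order $d/\epsilon^2$ times the trace ratios --- vacuous for large $d$ --- not $1/(d\epsilon^2)$. The advertised $1/(d\epsilon^2)$ bound is obtained only if the accuracy $\epsilon$ is imposed on $\log\langle\tilde v,R_\theta^{-1}\tilde v\rangle$ itself, i.e.\ on the normalized quantity $\tfrac1d\log p_\theta$ (up to the deterministic term). This is in fact what the paper's appendix silently does: when it ``rewrites (10)'' it drops the factor $d$. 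So your argument, carried out carefully, proves either a concentration statement for the per-dimension log-likelihood with the stated probability, or one for $\log p_\theta$ with a probability bound worse by a factor $d^2$; it cannot deliver both halves of the theorem as literally stated, and neither does the paper. You should decide which normalization you are proving and say so explicitly rather than asserting that the $d/2$ and the $1/d$ ``combine'' to give $1/(d\epsilon^2)$ --- they do not.
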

The proof can be found in the appendix. 
Whenever one assumes a limit for $d\to \infty$ in which
the expressions with $\tau$ converge\footnote{This holds, for instance, for any sequence  $\sx^{(d)}$ for which
the eigenvalues have a uniform positive lower bound $b$
and the
distribution of eigenvalues converges weakly to some measure $\mu$. Then, $\tau\left( f(\sx^{(d)})\right)$  converges to $\int f d\mu$ for any bounded continuous function $f:[b,\infty)\rightarrow \R$ by definition of weak convergence.}, the error thus tends to zero. 
Intuitively speaking, the reason is that drawing one vector 
from $p_{\theta'}$ in dimension $d$ can be reduced to drawing $d$ independent coefficients with respect to an appropriate basis, which finally reduces the problem to the usual law of large numbers.

\section{Estimating confounding\\ strength $\beta$ \label{sec:estbeta}}

To infer $\beta$ (which we defined as our crucial task)
 from $\theta$ we need some approximations that hold for
large $d$. First we use $\|\ba\|^2/d \approx \sigma_a^2$
which is justified by the law of large numbers. Moreover we can estimate the length of $M^{-T} \bc$ using the trace of the concentration matrix of $\bX$:
\begin{eqnarray*}
\frac{1}{d}\|\ba' -\ba\|^2 &=& \frac{1}{d} \|M^{-T} \bc\|^2 = \frac{1}{d} \langle \bc, M^{-1}M^{-T} \bc\rangle \\ &\approx & 
\sigma_c \tau(M^{-1} M^{-T}) \\
&=&
\sigma_c \tau(M^{-T} M^{-1}) \\&=& \sigma_c \tau(\sx^{-1}), 
\end{eqnarray*}
where the approximation uses also the law of large numbers since we can generate $\bc$ by drawing its coefficients with respect to
the eigenbasis of $M^{-1} M^{-T}$ from independent Gaussians of
standard deviation $\sigma_c$. 
Thus we obtain
\begin{equation}\label{eq:betaApp}
\beta \approx \frac{\tau(\sx^{-1} ) \sigma_c^2}{\tau (\sx^{-1} ) \sigma_c^2 +  \sigma_a^2}= \frac{\tau (\sx^{-1} ) \theta}{\tau (\sx^{-1} ) \theta + 1}. 
\end{equation}

Putting everything together, we obtain the following procedure for estimating $\beta$ from $(\bX,Y)$ samples:
\begin{enumerate}
\item Compute the empirical covariance matrices
$\widehat{\sx}$ and $\widehat{\sxy}$.

\item Estimate $\ba'$ via 
\[
\widehat{\ba'} := \widehat{\sx}^{-1} \widehat{\sxy}.
\]

\item Infer $\theta$ via maximizing the likelihood
$
\log p_\theta (\widehat{\ba'}/\|\widehat{\ba'}\|)
$
defined by \eqref{eq:likeli}.

\item Compute $\beta$ from the estimated value of $\theta$ via
\eqref{eq:betaApp}.

\end{enumerate}

Here we have neglected finite sample issues completely. 
We will discuss them in section~\ref{sec:overfitting}.

\section{Test for non-confounding \label{sec:test}}

To test the null hypothesis  $\theta=0$, that is $\ba'=\ba$, we
define the test statistics (applied to a {\it single} instance $\tilde{v}=\ba'/\|\ba'\|$)
\begin{equation}\label{eq:T}
T(\tilde{v}) := \frac{1}{\sqrt{d}} \left\{\langle \tilde{v},\sx^{-1} \tilde{v} \rangle -
\tau (\sx^{-1}) \right\}.
\end{equation}
One can easily show that its expectation is zero when $\tilde{v}$
is drawn uniformly at random from the unit sphere, which we assumed for the unconfounded case. Intuitively, the definition of $T$ is motivated by the idea to detect overpopulation of
eigenspaces with small eigenvalues, which we expect for confounding. As a further justification, we observed that
$T$ coincides, up to a scaling factor, with the score function 
\[
\frac{\partial \log p_\theta(\tilde{v})}{\partial \theta},
\]
at $\theta=0$.
This is a  natural candidate for detecting changes of $\theta$ because score functions occur in the construction of optimal estimators whenever there  exist unbiased estimators attaining the Cram\'{e}r Rao bound \cite{Cramer46}.

To derive a simple approximation for the null distribution of $T$
we think of $\tilde{v}=\ba'/\|\ba'|$ as being generated by drawing its coefficients $a_j$ 
with respect to the eigenbasis of $\sx^{-1}$ from $\cN(0,1/\sqrt{d})$ followed by renormalization:
\begin{eqnarray*}
T(\tilde{v}) &= & \frac{1}{\sqrt{d}} \left( \frac{\sum_{j=1}^d a^2_j s_j }{\sum_{j=1}^d s^2_j}
- \tau (\sx^{-1}) \right)\\
&\approx & \frac{1}{\sqrt{d}} \left(\sum_{j=1}^d a^2_j s_j 
- \tau (\sx^{-1}) \right),
\end{eqnarray*}
where $s_j$ denotes the eigenvalues of $sx^{-1}$.
Already for moderate size of $d$, we can thus get a good approximation for the null distribution of $T$ 
by a weighted sum of squared Gaussian, i.e., it
approximately follows
a mixed $\chi^2$-distribution. 

\section{Overfitting\label{sec:overfitting}} 

So far we have completely ignored finite sampling issues.
High-dimensional regression requires regularization which could
spoil our model assumptions, e.g., if they enforce sparsity
which is not compatible with our rotation invariant prior on $\ba$. 
Therefore, the method should only be applied if the sample size is sufficiently high for the respective dimension (see 
section~\ref{sec:exsim}) to avoid overfitting. Remarkably,
overfitting generates the sample kind of `dependences' between
the estimator of $\ba'$ and the estimator of
$\sx$ as confounding generated for the true objects $\ba'$ and $\sx$ themselves. 

To show this,
assume that $Y$ is independent of $\bX$
and let $(x^j_1,\dots,x^j_d,y^j)$ for $j=1,\dots,n$ be samples independently drawn from $P_{\bX} P_Y$, where $P_\bX$ is arbitrary and $P_Y$ is Gaussian.
Define the matrix 
\[
\bx:=\left(x^i_j-
\bar{x}_j \right)_{i=1,\dots,n,j=1,\dots,d},
\]
where $\bar{x}_j:=\frac{1}{n} \sum_{i=1}^n x^i_j$ denotes the empirical average of the respective component.
Likewise, 
define the vector $y:=(y^1,\dots,y^n)^T - \bar{y} (1,\dots,1)^T $. 
Then, we obtain 
$
\sx = \bx^T \bx 
$
and 
$
\sxy = \bx^T y
$
(where we have skipped the symbol $\widehat{\cdot}$
for better readability). 
Since $y$ is the projection of $(y_1,\dots,y_n)^T$ onto the orthogonal complement of ${\bf 1}:=(1,\dots,1)^T$, its distribution is isotropic
in the $n-1$-dimensional subspace defined by the orthogonal complement ${\bf 1}^\perp$ of ${\bf 1}$. Let $V$ be an $(n-1)\times n$ matrix that rotates 
${\bf 1}^\perp$ onto $\R^{n-1}$. Then we may write
\[
\sx = \bx^T V^T V \bx, 
\]
because the image of $\bx$ is contained in the image of the projection
$V^T V$.
Moreover,
\[
\sxy = \bx^T V^T V y.
\]   
To show the formal analogy to the mixing scenario above we now set $M:= V\bx$ and $y':=V y$. Then we can write
$
\sx = M^T M
$
and
$
\sxy =M^T y',
$ 
and thus obtain
\[
\hat{\ba} = M^{-T} y',
\]
where $y'$ is isotropically chosen from $R^{n-1}$.
The generating model for $\hat{\ba}$ thus coincides
with the model above with $\ell = n-1$ for the case of pure confounding. 

Computing an unregularized regression 
for $\bX$ and $Y$ being independent
thus yields 
a regression vector $\ba'$ that is also mainly located in the low eigenvalue eigenspace of $\sx$. 
We expect the same behavior if $\bX$ influences $Y$ 
without confounder when the sample size is so small that the observed correlations are dominated by
statistical fluctuations rather than by the true causal influence. 

On the one hand one may regret that confounding and overfitting becomes indistinguishable. On the other hand, the method thus provides an unified approach to detect that a regression vector $\widehat{\ba'}$  does not show the true causal influence; either because $\widehat{\ba'} \neq \ba'$ or because
$\ba' \neq \ba$ due to confounding. 
There is a simple reason why both cases generate similar dependences between $\sx$ and $\ba'$:
Whenever $\sxy$ is a vector that has been generated independently of $\sx$, the vector $\sx^{-1}\sxy$   
tends to live mainly in the small eigenvalue subspace 
of $\sx$. Only if $\sxy$ is not drawn independently of $\sx$, for instance, because it is generated by
$\sx \ba$ (where $\ba$ is drawn independently of $\sx$),  this overpopulation of small eigenvalues does not happen.

\section{Experiments with simulated data \label{sec:exsim}}

The code and the data sets for all experiments are available at \url{http://webdav.tuebingen.mpg.de/causality/}.
We generated models as follows:
\begin{enumerate}
\item We have drawn $n$ samples of each $Z_1,\dots,Z_\ell$ as independent standard Gaussians 
\item We have drawn the entries of $M$ by independent standard Gaussians
\item We have drawn the parameters $\sigma_a,\sigma_c$ from the uniform distribution on $[0,1]$
\item We have drawn each coefficient of $\ba\in \R^d$ and $\bc\in \R^\ell$ from Gaussians of standard deviation
$\sigma_a$ and $\sigma_c$, respectively.  
\item We computed samples $(\bX,Y)$ via the structural equations 
$\bX = M \bZ$ and $Y= \ba^T \bX +  \bc^T \bZ$. 
\end{enumerate}
Knowing the above  parameters, we can easily compute the exact confounding strength using
\[
\beta = \frac{\|M^{-T} \bc\|^2}{\|\ba\|^2 + \|M^{-T} \bc\|^2}.  
\]

\subsection{Estimating $\beta$}

We have estimated $\beta$ as described at the end of section~\ref{sec:estbeta} for $d=\ell=10,20,50,100$ with sample size $10,000$. The scatter plots in Figure~\ref{fig:allsimulations}
show the relation between the true values $\beta$ and the estimated values $\hat{\beta}$.
\begin{figure}
\centerline{
\begin{tabular}{c}
\adjincludegraphics[width=0.3\textwidth,trim={0.08\width}]{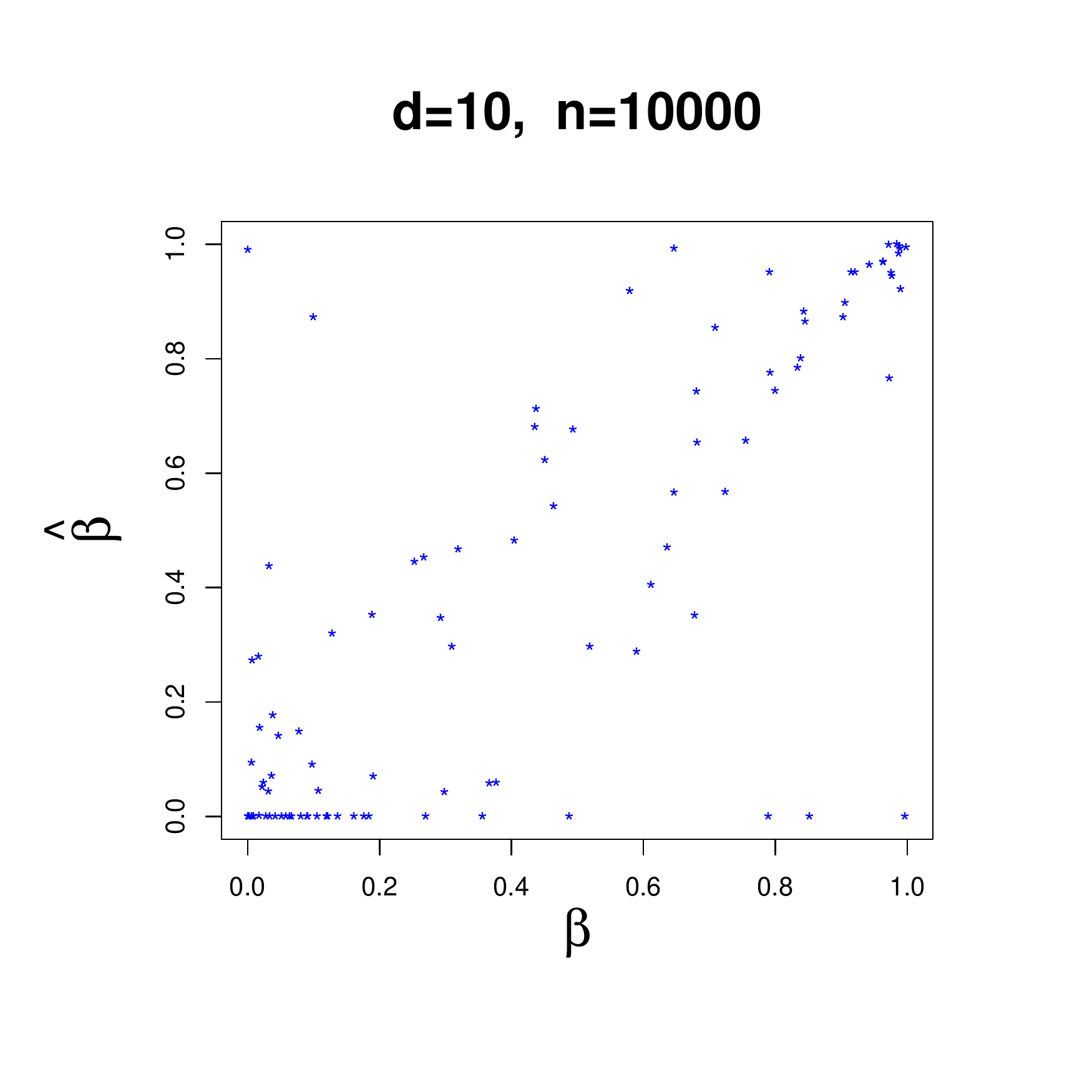}   \\
\adjincludegraphics[width=0.3\textwidth,trim={0.08\width}]{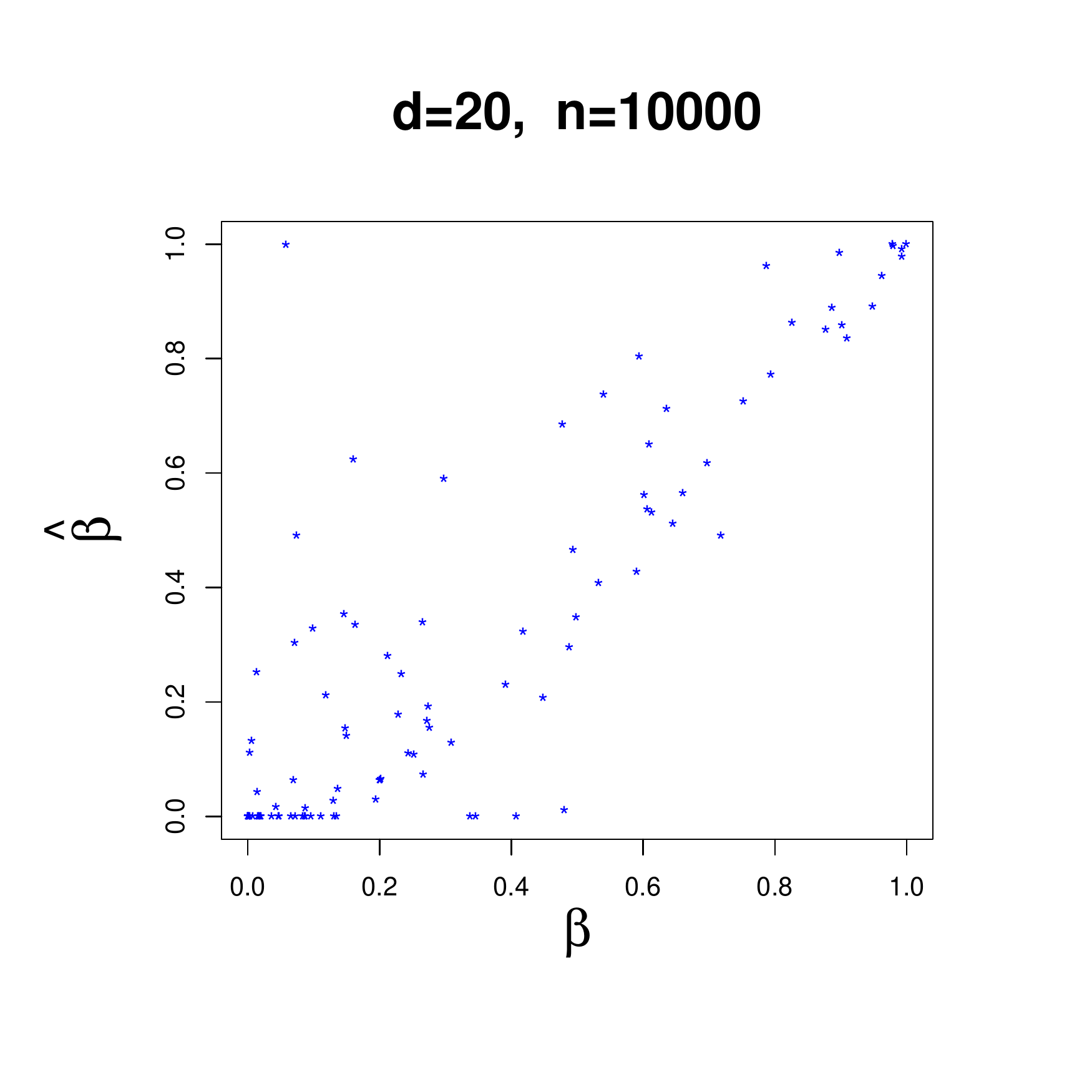} \\
\adjincludegraphics[width=0.3\textwidth,trim={0.08\width}]{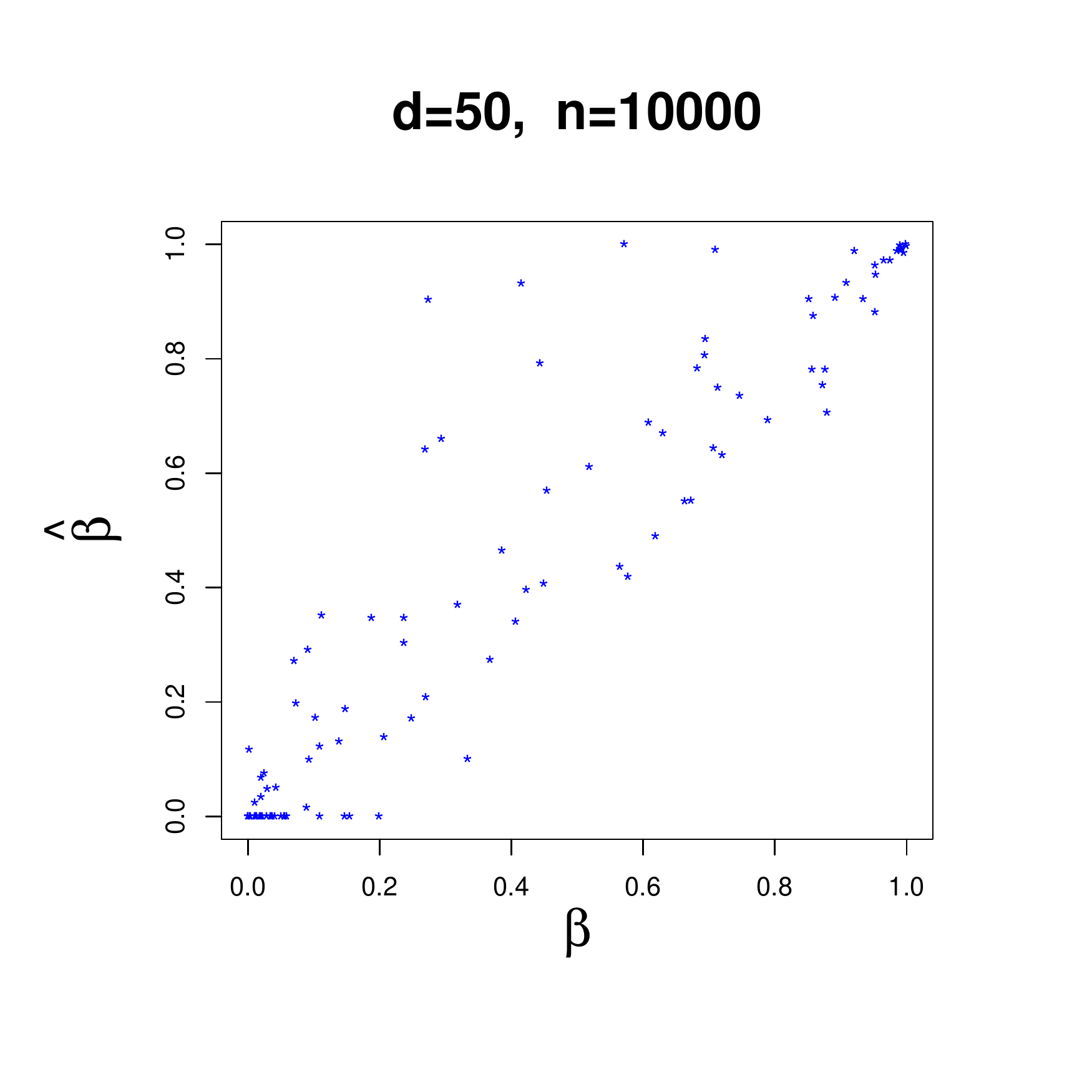}  \\
\adjincludegraphics[width=0.3\textwidth,trim={0.08\width}]{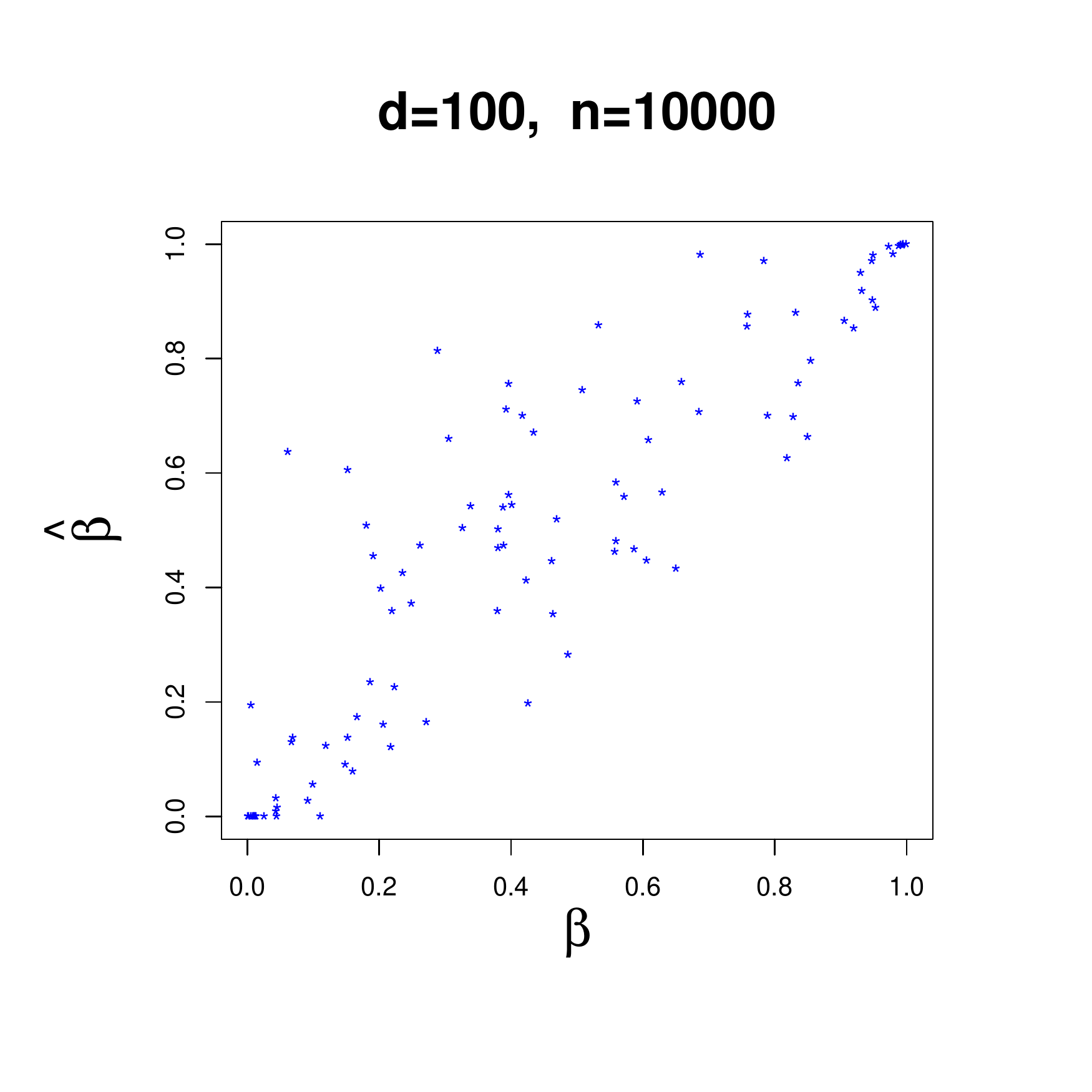}
\end{tabular}
}
\caption{\label{fig:allsimulations} Simulation results: true value $\beta$ versus estimated 
value $\hat{\beta}$ for different dimensions $d$ and sample size $n=10,000$.} 
\end{figure}
One can see that $\beta$ and $\hat{\beta}$ are clearly correlated
and that the performance increases (although slowly) for higher dimension. The estimation is reasonably good in the regions
where $\beta$ is close to $0$ or $1$, which suggests that one should rather trust in the qualitative statement about whether there is confounding or not than in the exact value of $\hat{\beta}$.

Since our theory has shown that $\ell$ is completely irrelevant in our idealized scenario 
provided that it is not smaller than $d$
(see the generating model in Definition~\ref{def:genOneVec}) it would be pointless  to explore the case $\ell>d$ here.

\subsection{Test for non-confounding}

For the simulated data described above we have applied the 
test for unconfoundedness described in section~\ref{sec:test}
by drawing $1000$ samples from the null distribution of $T$ and comparing them to the observed value $T(\hat{\ba'}/\|\hat{\ba'}\|)$. Figure~\ref{fig:testScatter} visualizes the joint distribution of p-values with $\beta$. 

One can see that for $\beta>0.5$ the p-values begin to
be mostly close to zero. 
Figure~\ref{fig:rejections} 
shows how the fraction of rejections increases when
$\beta$ increases for the two cases where the confidence level $\alpha$ is set to $0.1$ (left) or $0.05$ (right).
Here we have chosen a one-sided test because confounding increases $T$ due to the overpopulation of subspaces with small eigenvalues of $\sx$.  
The results show that for those confidence levels 
unconfoundedness is mostly rejected for models with $\beta>0.6$.

\begin{figure}[!ht]
\centerline{
\adjincludegraphics[width=0.5\textwidth,Clip={0\width} {0.15\height} {0\width} {0.05\height}]{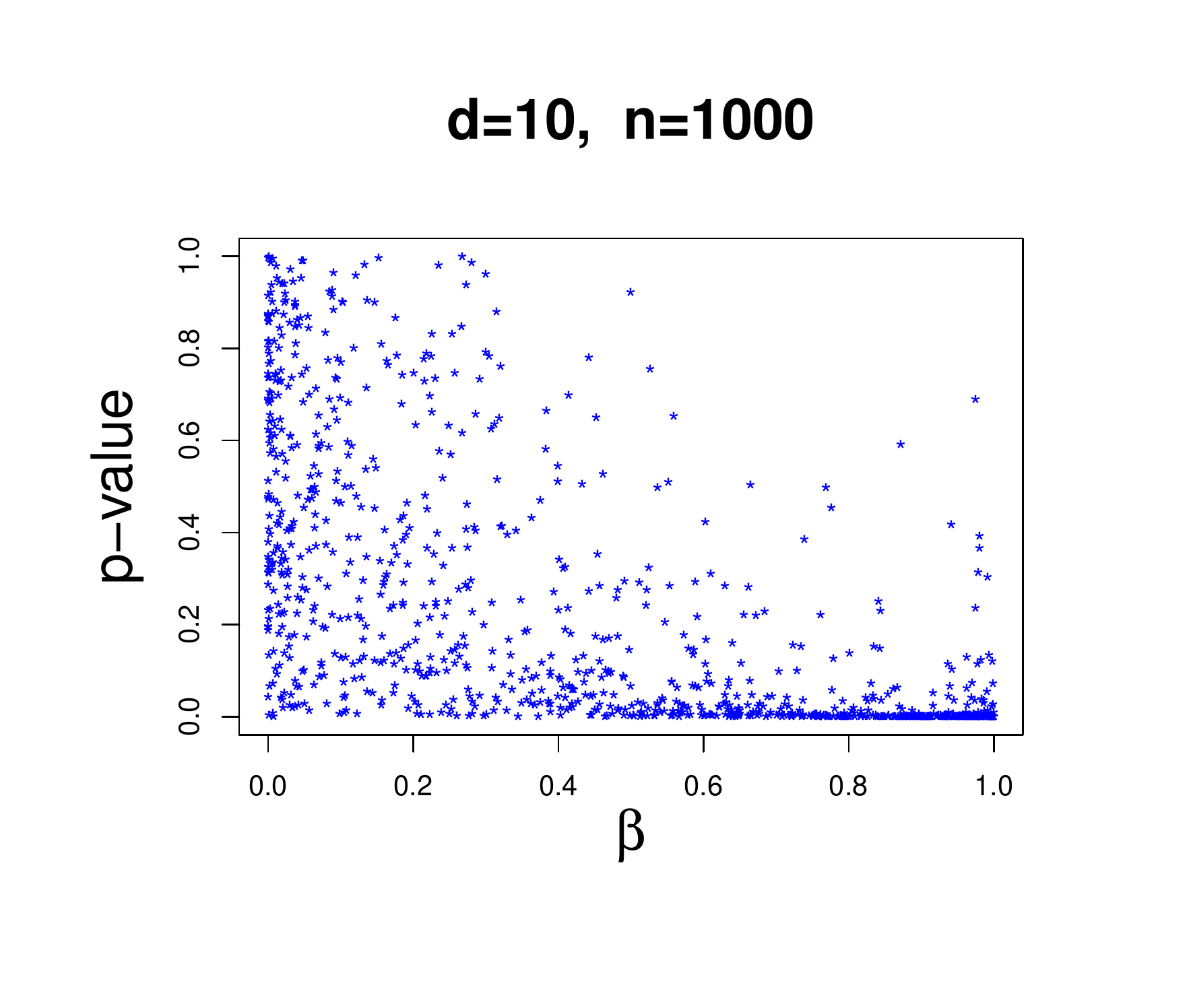}
}
\caption{\label{fig:testScatter} p-values obtained in the test for non-confounding for different values of the confounding parameters $\beta$. It can be seen that the p-values get close to zero when  $\beta$ tends to $1$.} 
\end{figure}
\begin{figure}[t]
\centerline{
\begin{tabular}{c}
\adjincludegraphics[width=0.35\textwidth,Clip={0\width} {0.08\height} {0\width} {0.06\height}]{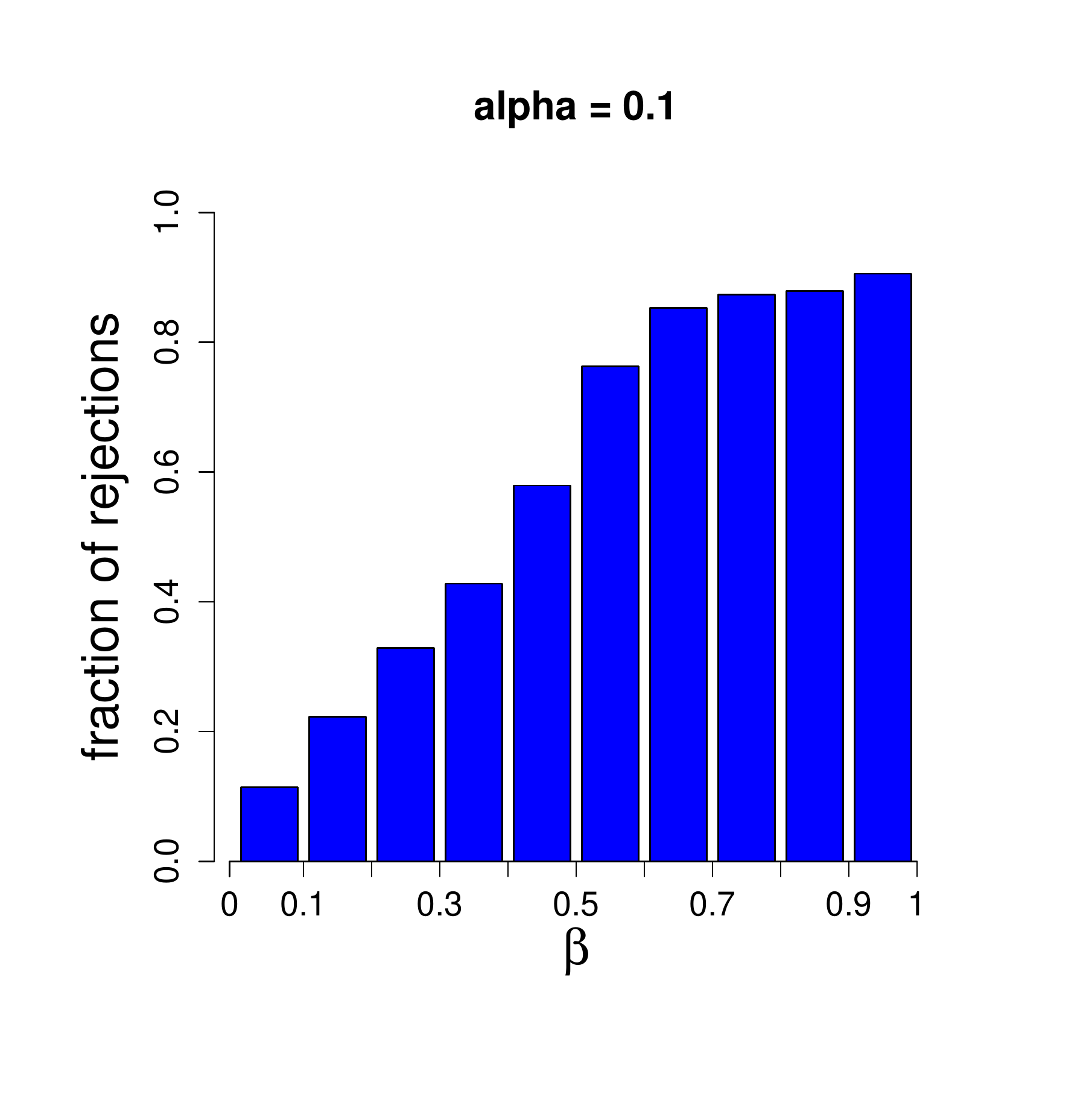}\\
\adjincludegraphics[width=0.35\textwidth,Clip={0\width} {0.08\height} {0\width} {0.06\height}]{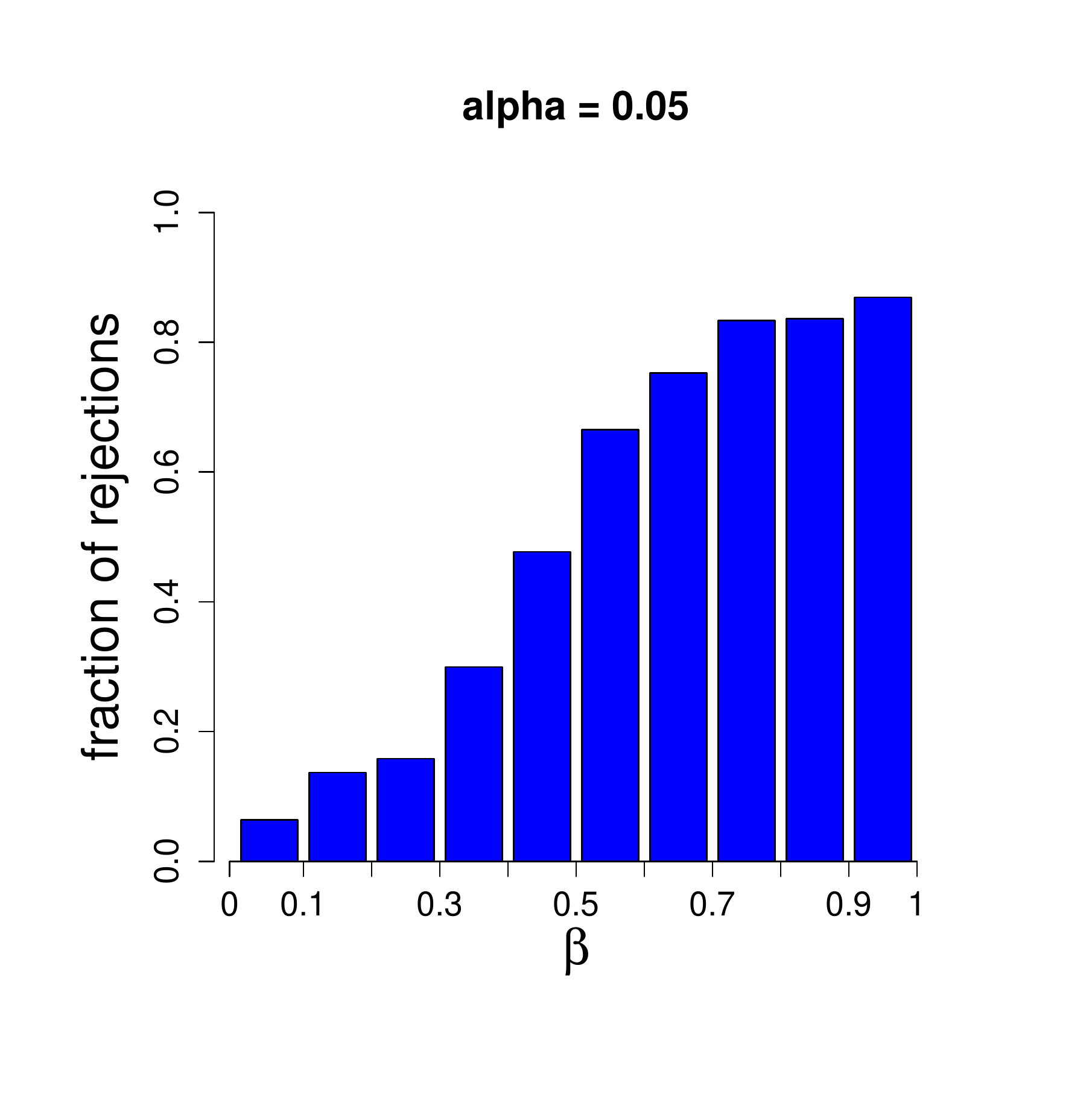}
\end{tabular}
}
\caption{\label{fig:rejections} Fraction of rejections when the confidence is chosen to be $0.1$ (top) and $0.05$ (bottom) for different values of $\beta$ in 5000 runs.} 
\end{figure}

\begin{figure}[!ht]
\centerline{
\begin{tabular}{c}
\adjincludegraphics[width=0.25\textwidth,Clip={0\width} {0.03\height} {0\width} {0.06\height}]{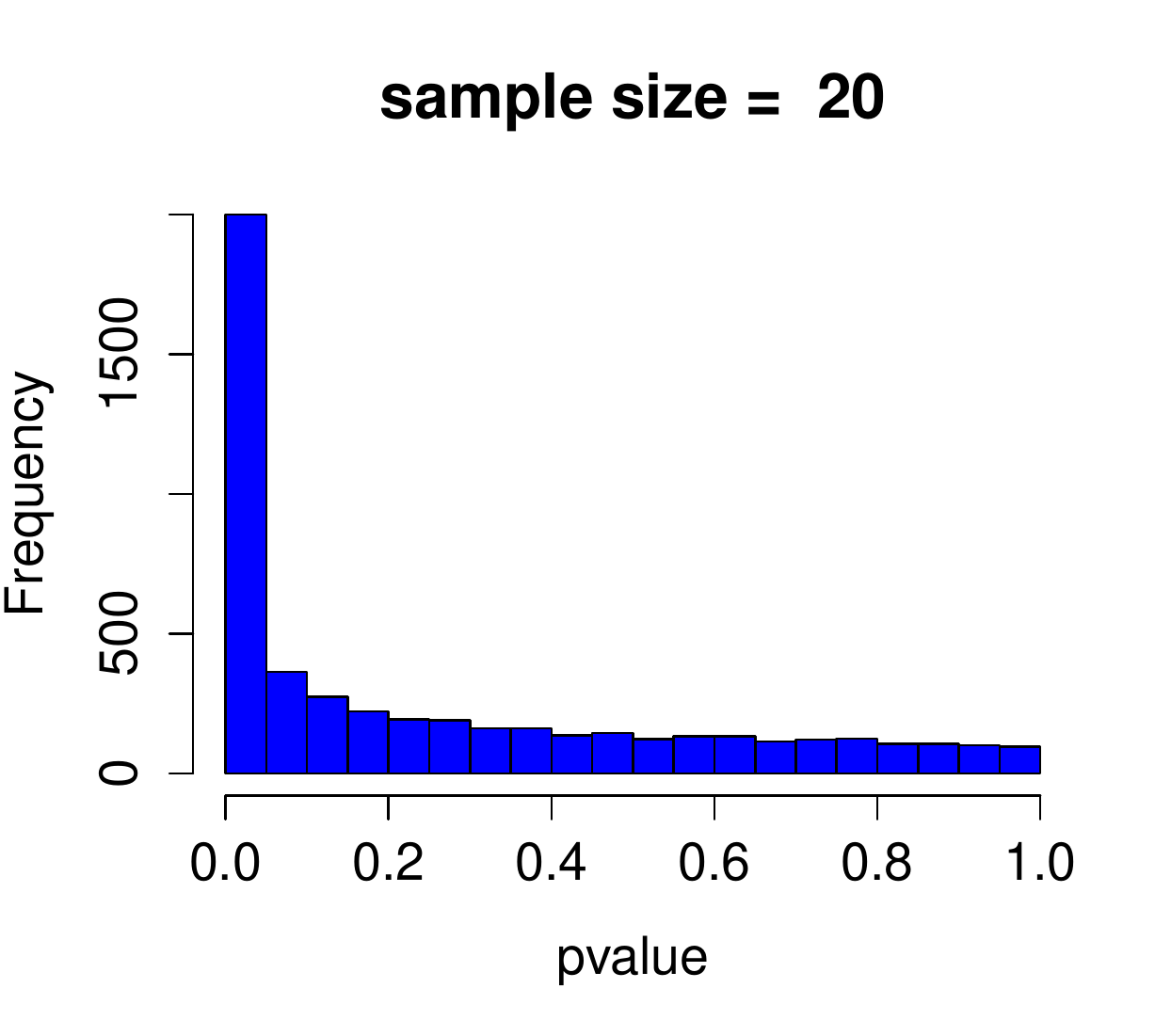} \\
\adjincludegraphics[width=0.25\textwidth,Clip={0\width} {0.03\height} {0\width} {0.06\height}]{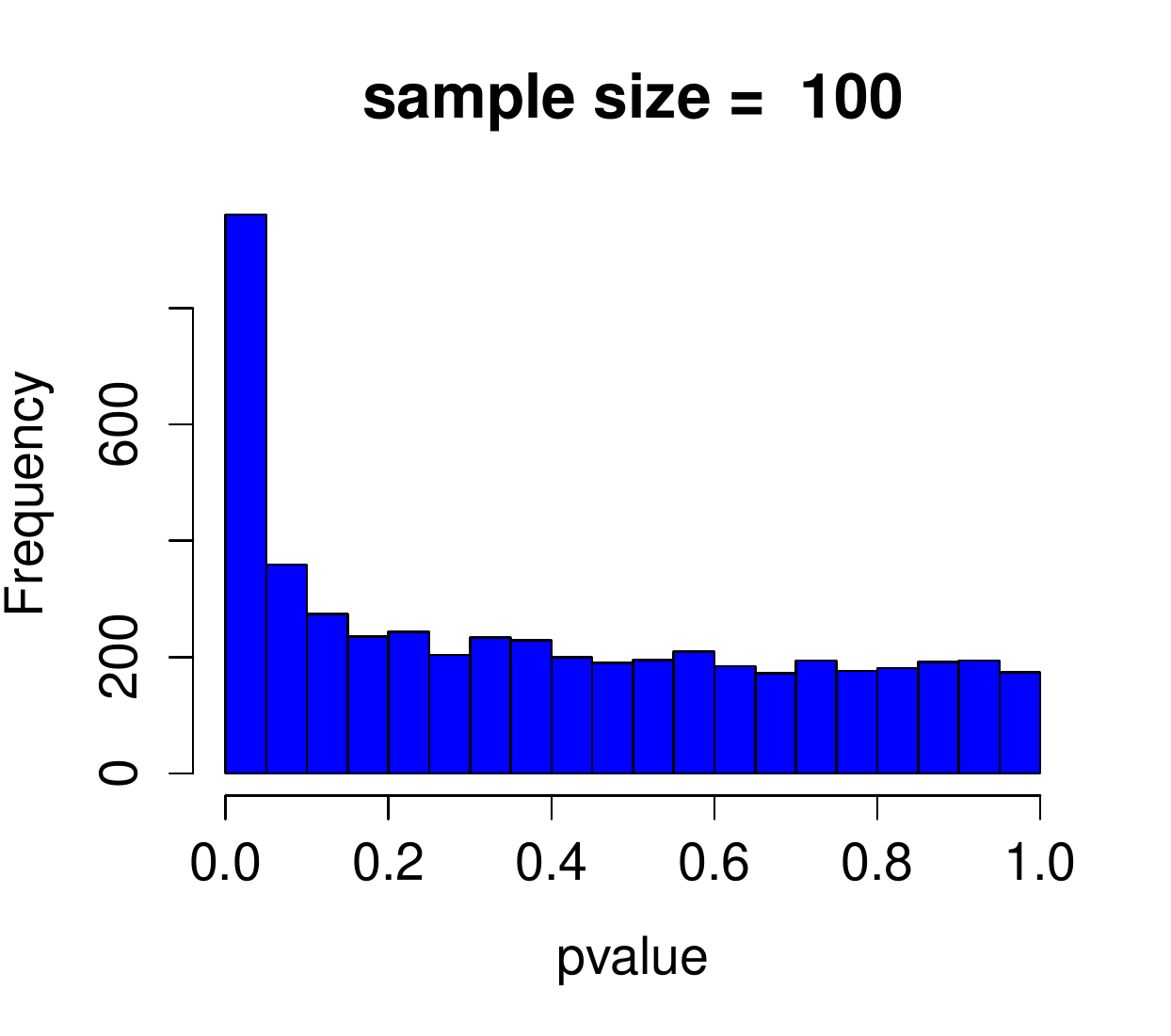}\\
\adjincludegraphics[width=0.25\textwidth,Clip={0\width} {0.03\height} {0\width} {0.06\height}]{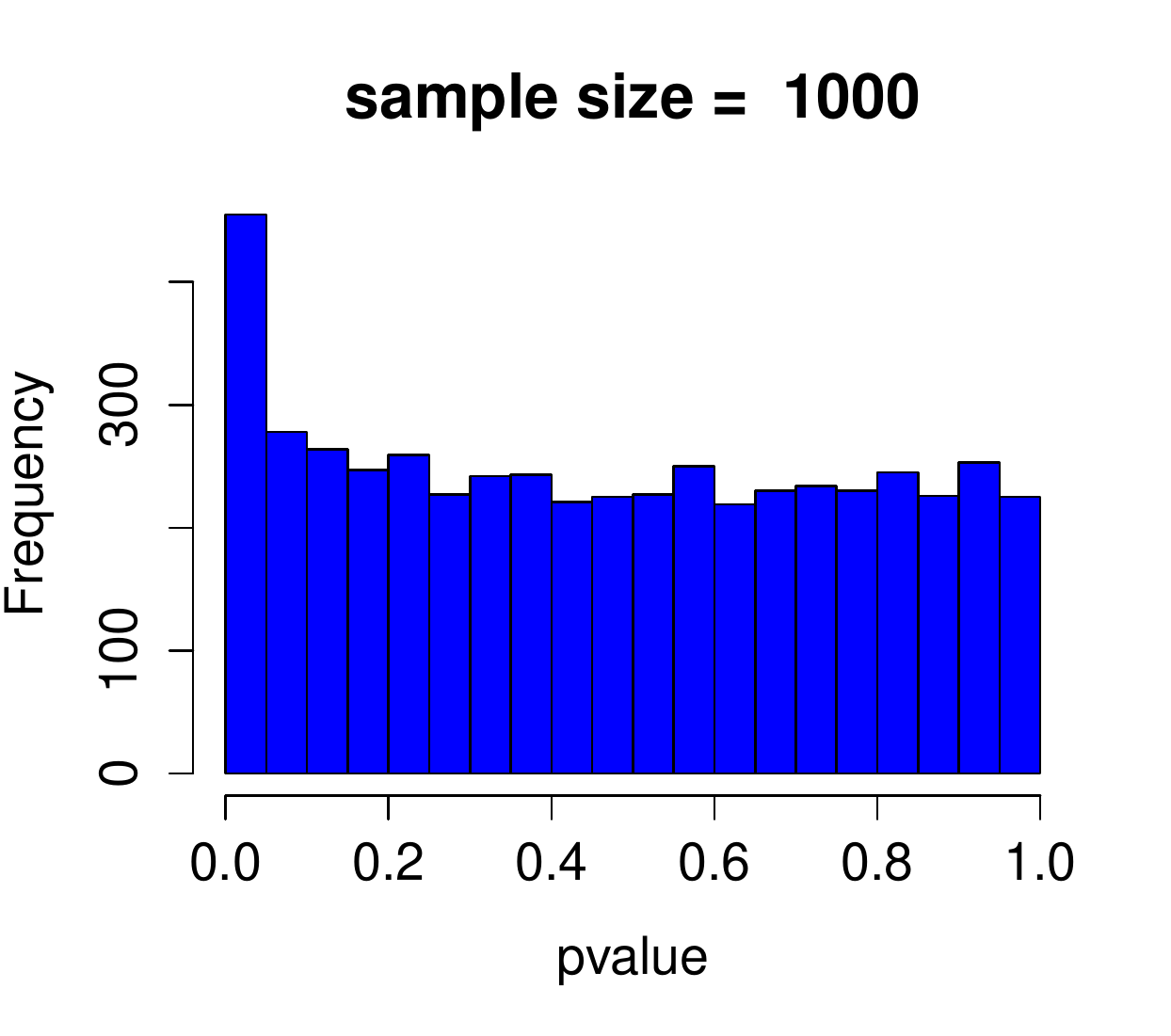} \\
\adjincludegraphics[width=0.25\textwidth,Clip={0\width} {0.03\height} {0\width} {0.06\height}]{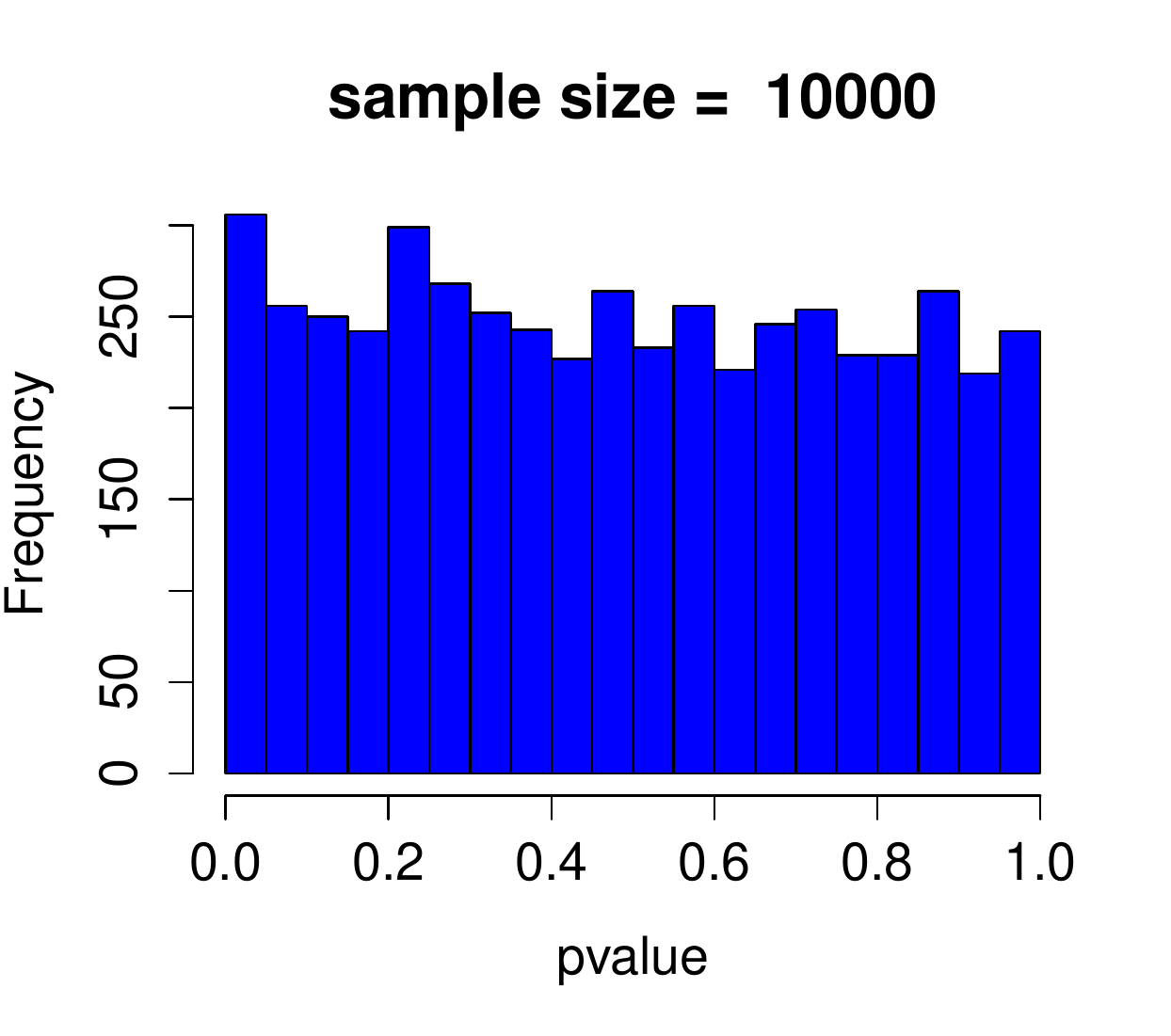}
\end{tabular}
}
\caption{\label{fig:overfit} Distribution of p-values in the statistical test for unconfoundedness in a scenario without confounding.} 
\end{figure}

\subsection{Overfitting} 

We generated $d+1$-tuples of $\bX,Y$  by  
first drawing $\bX$ via a random mixing matrix and then $Y$ by $Y = \ba^T \bX + E$, where $E$ is $\cN(0,1)$ distributed and $\ba$ is a
random vector whose entries are randomly drawn from $\cN(0,1)$. 

Figure~\ref{fig:overfit} shows the distribution of p-values of the test for unconfoundedness for different sample sizes $n$.
As one can see, for $n=20$ one gets mostly small p-values  although the model is actually unconfounded
(in agreement with our theoretical insights saying that overfitting yields the same type of untypical 
regression vectors as confounding). For $n=100$ and $n=1000$, small p-values are still overrepresented
and only 
for $n=10,000$ the distribution of p-values is close to uniform. 
This suggests that dimension $10$ already requires sample sizes of the order $10,000$ if one wants to avoid too many false rejections (when focusing on confounding rather than on overfitting).

\section{Experiments with real data}

Since it is hard to get data where the confounding strength $\beta$ is known we can mostly only discuss plausibility except
for the data set in the following section.

\subsection{Data from an optical device}

\citet{multivariateConfound} describe 
an optical device where the causal structure
and  $\beta$ is known by construction. The variable
$\bX$ is a low-resolution image ($3\times 3$ pixel)
shown on the screen of a laptop and $Y$ is the brightness measured by a photodiode at some distance in front of the screen. 
The image $\bX$ is generated by a webcam placed in front of a TV.
As confounder $Z$ (which is one-dimensional following the assumptions of \citet{multivariateConfound}), an LED in front of the photodiode and another LED in front of the webcam is controlled by a random noise. Since $Z$ is known,
an approximation of $\beta'$ for
 $\beta$ 
can be directly computed from the observed covariances 
($\beta\neq \beta'$ only due to finite sample issues).
We first tried the $11$ data sets
 with variable confounding and obtained the results displayed in Figure~\ref{fig:opticalResults}.
The results are quite similar to those from \citet{multivariateConfound} although the scenario matches the
very specific one-dimensional confounding scenario there while our model is more general. 
Also here the results are qualitatively right 
($\beta'$ and $\hat{\beta}$ 
are significantly correlated)
but with a clear tendency to underestimate confounding, which has alerady been discussed by \citet{multivariateConfound}.

\begin{figure}[ht]
\centerline{
\adjincludegraphics[width=0.4\textwidth,Clip={0\width} {0.13\height} {0\width} {0.1\height} {0\width}]{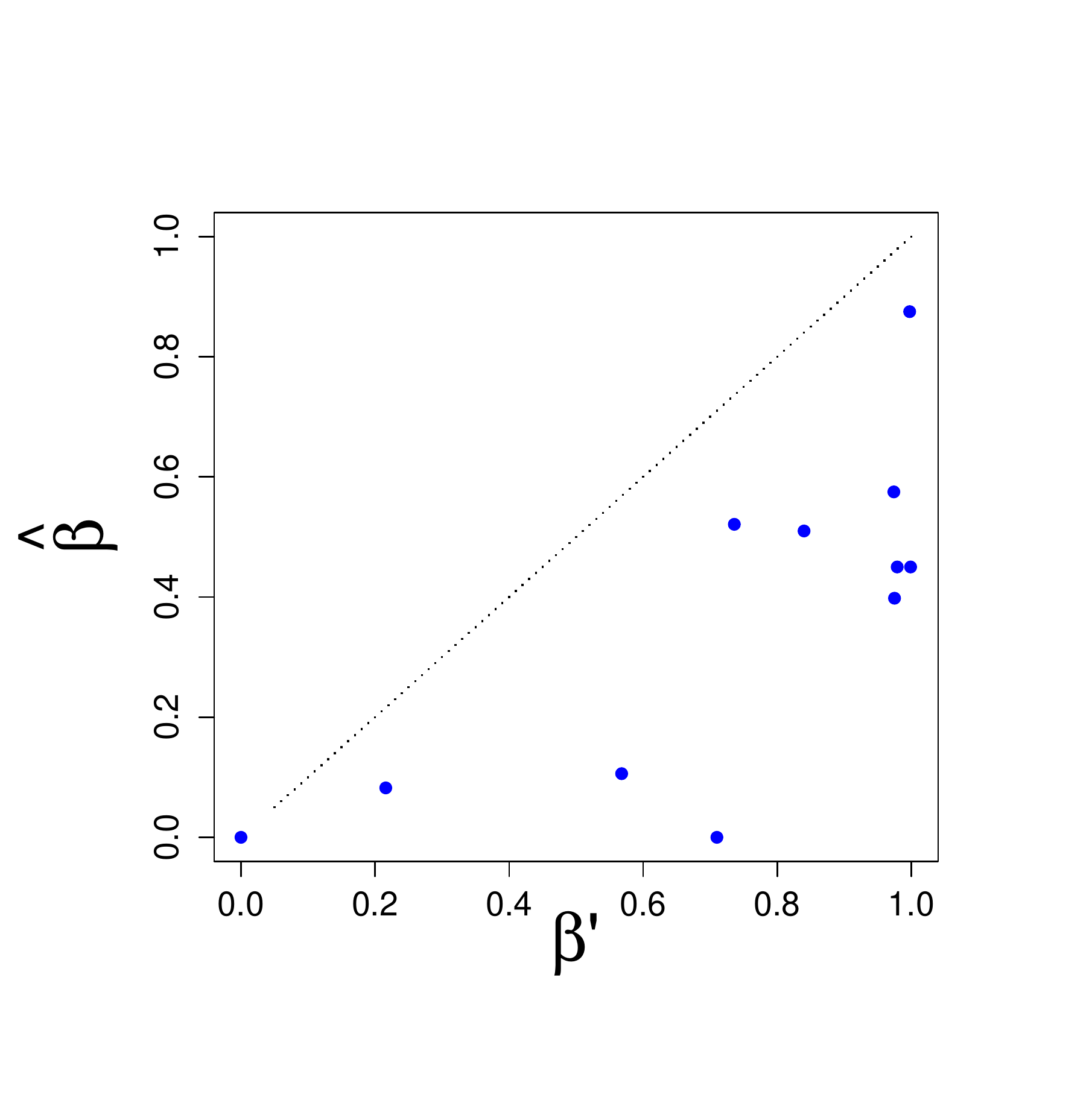} 
}
\caption{\label{fig:opticalResults} 
True and estimated confounding strength for the optical device used by \citet{multivariateConfound}.} 
\end{figure}

We also tested the two data sets where one is purely confounded  ($\beta=1$) and one completely unconfounded ($\beta=0$) and obtained
$\hat{\beta}=0.768$ and
$\hat{\beta}= 0$, respectively.

\subsection{Taste of wine}

This dataset  \cite{Lichman2013} describes the dependence between 
the scores on the taste between 0 and 10  (given by human subjects) of red wine, and 11 different ingredients:
$X_1$: fixed acidity, 
$X_2$: volatile acidity, 
$X_3$: citric acid, 
$X_4$: residual sugar, 
$X_5$: chlorides, 
$X_6$: free sulfur dioxide, 
$X_7$: total sulfur dioxide, 
$X_8$: density, 
$X_9$: pH, 
$X_{10}$: sulphates, 
$X_{11}$: alcohol.  
Taking the taste $Y$ as target variable
we obtained $\hat{\beta}=0$ (after we normalized all $X_j$ to unit variance since their scale where incompatible) which is plausible to some extent given
that the crucial ingrendients are considered in the data set.

After dropping alcohol, which one can easily check to have the most dominant influence on taste (given that the relation between the full variable $\bX$ and $Y$ has been unconfounded), we obtained 
$\hat{\beta} =0.62$, which sounds sensible since the set of predictor variables is no longer sufficient. 
When we dropped one of then other $X_j$, we always obtained $\hat{\beta}$ zero or close to zero (in one case). Since the other variables influence the taste
much weaker than $X_{11}$, the algorithm is not able to detect any significant confounding.

\subsection{Data sets with shuffling the target variable}

Here we describe a family of experiments where 
each single one cannot be assessed but one can discuss whether
the collection of results seem sensible.

If a
data set contains $d+1$ correlated variables $X_1,\dots,X_{d+1}$ we can 
take each $X_j$ as hypothetical target variable $Y$
and the remaining variables $\bX^{(j)}:=(X_1,\dots,X_{j-1},X_{j+1},\dots,X_{d+1})$ as hypothetical causes. 
Although we do not know whether some of these $d+1$ choices are purely causal in the sense that $X^{(j)}$
influences $Y^{(j)}:=X_j$ without confounder, we know that not all of them are purely causal because not all the variables can be a sink node of the underlying causal DAG.

Since our model uses independent sources
as in Independent Component Analysis (ICA)
as basis it is natural to apply our method to 
data sets that have been used in the context of ICA, for instance data from Magnetoencephalographic Recordings (MEG)\footnote{The data set is available at \url{http://research.ics.aalto.fi/ica/eegmeg/MEG_data.html}}
used by \citet{icaMeg}. The data set contains a data matrix with
$17,730$ samples of recordings from $122$ channels
in a
whole-scalp Neuromag-122 neuromagnetometer. 
We have used the first $10$ channels as $X_1,\dots,X_{d+1}$  and took each of it as potential target
and the remaining ones as potential causes.
We then obtained for $j=1,\dots,10$ the results 
$\hat{\beta} = 1.0$, $1.0$, $1.0$, $1.0$, $0.0$, $0.1$, $0.6$, $1.0$, $1.0$, $1.0$. 
We do not know the ground truth, but it sounds
reasonable that most of the cases are considered strongly confounded by the algorithm. 

\section{Discussion}

We have shown that our idealized model assumptions  
make it possible to infer whether the observed  correlations between the multi-dimensional predictor and the target variable are 
truly causal or an artifact of confounding or overfitting. 
For our assumptions, both cases of artifacts yield a  `dependence'
between the covariance matrix of the potential cause and the regression vector for predicting the effect from the potential cause. Here, `dependence' has the very simple meaning that principal components corresponding to small eigenvalues being over-represented in the decomposition of the regression vector while the meaning of `dependence' for the scenario from \citet{multivariateConfound} is more complex.

In our real data experiments, confounding seemed to be often underestimated, which suggests that real data generating process
deviate from the model assumptions in a way that the effect
of confounding is less visible by our method than the model
predicts. 
Despite these limitations,  
our findings may inspire further search for 
hidden causal information in high-dimensional data and provide an intuition about the relevance of concentration of measure effects in causal inference.


\section{Appendix}

\subsection{Proof of Lemma~1}

We first write $\Phi$ as
$
\Phi(v) = g(Av) Av,
$
with $g(w):=1/\|w\|$.   
Let $t \mapsto s(t)$ be some curve on the unit sphere $S^{d-1}$
and $\tilde{s}(t):=\Phi(s(t))$ its image. 
Then we have
\begin{eqnarray*}
\frac{d}{dt} \Phi(s(t)) &=& \langle \nabla g(As(t)), As'(t)\rangle As(t)\\
&&+ g(As(t)) As'(t),
\end{eqnarray*}
with
$
\nabla g(w) = -w/\|w\|^3 
$.
Hence we obtain
\begin{eqnarray}
&&\frac{d}{dt} \Phi(s(t)) \\
&=& \nonumber
\frac{-1}{\|As(t)\|^3 } \langle  As(t)   , As'(t)\rangle As(t)\\ \nonumber
&&+ g(As(t)) As'(t)  \\ \nonumber
&=& g(As(t)) \left(A s'(t) - \tilde{s}(t) \tilde{v}(t)^T s'(t)\right)\\ 
&=&   \label{eq:projformula}
g(As(t)) \left({\bf 1} - \tilde{s}(t) \tilde{s}(t)^T \right) As'(t).
\end{eqnarray}
Note that the matrix ${\bf 1} - \tilde{s}(t) \tilde{s}(t)^T$ projects $As'(t)$ onto the tangent space at $\tilde{s}(t)$ and
\eqref{eq:projformula} describes the Jacobian $D\Phi$
which maps between tangent spaces $T_{s(t)}$ and $T_{\tilde{s}(t)}$ at $s(t)$ and $\tilde{s}(t)$, respectively.  
Let $e_1,\dots,e_{d-1}$ and $\tilde{e}_1,\dots,\tilde{e}_{d-1}$ be orthonormal bases of $T_v$ and $T_{\tilde{v}}$, respectively. If we set $U_v:=(e_1,\dots,e_{d-1})$ and $U_{\tilde{v}}:=(\tilde{e}_1,\dots,\tilde{e}_{d-1})$, the Jacobian with respect to these bases 
reads
\[
\widehat{D\Phi}(v) := g(Av) U^T_{\tilde{v}} A U_v.
\]
We then have
\[
\det \widehat{D\Phi} (v) = g(Av)^{d-1} \det (U^T_{\tilde{v}} A U_v ). 
\]
Multiplying the equation $\tilde{v} =Av/\|Av\|$ with $A^{-1}$ and taking the norm on both sides yields 
\begin{equation}\label{eq:Ainvnorm}
1/\|Av\| = \|A^{-1} \tilde{v}\|.
\end{equation}
We thus obtain
\begin{eqnarray}\nonumber
p(\tilde{v}) &=& |\det \widehat{D\Phi}(\Phi^{-1}(\tilde{v}))|^{-1} \\
&=& \nonumber
\left(\|A^{-1}\tilde{v} \|^{d-1} |\det (U^T_{\tilde{v}} A U_v)|\right)^{-1}\\
&=& 
\left(\|A^{-1}\tilde{v} \|^{d-1} |\det (\tilde{A})|\right)^{-1}, \label{eq:density2}
\end{eqnarray}
with the abbreviation $A':=U^T_{\tilde{v}} A U_v$.
Let 
us now define the orthogonal $d\times d$ matrices
\[
W_v:=(U_v,v) \quad \hbox{ and } \quad (U_{\tilde{v}},\tilde{v}).
\]
Then we define
$
A' := W^T_{\tilde{v}} A W_v,
$
which implies 
$
|\det (A')| =|\det (A)|.
$
$A'$ can be written as
\[
A' = \left(\begin{array}{cc} \tilde{A} & 0 \\ w & \|Av\| \end{array}\right),
\]
where $w$ is some $1\times (d-1)$-matrix and 
$\tilde{A}:=U_{\tilde{v}}^T A U_v$. 
Hence we obtain
\[
\det (A') = \det (\tilde{A}) \|Av\| = \frac{\det (\tilde{A})}{\|A^{-1} \tilde{v}\|}, 
\]
where we have used also \eqref{eq:Ainvnorm}.
We can thus rewrite \eqref{eq:density2}  as
\[
p(\tilde{v}) = \frac{1}{|\det(A)|\|A^{-1}\tilde{v}\|^d} 
\] 

\subsection{Proof of Theorem~3}

By definition, $p_{\theta'}$ is obtained by applying the map
$\sqrt{R_\theta'}$ to vectors drawn from a rotation invariant distribution with renormalizing it later. Without loss of generality, let $R_\theta$ be diagonal with eigenvalues
$f_j(\theta)$. 
Let
$v$ be generated by drawing each entry $v_j$ from $\cN(0,1)$. 
We can then compute  the entries of $\tilde{v}$ by 
\[
\tilde{v}_j := \frac{1}{\sum_{i=1}^d  f_j(\theta') }\sqrt{f_j(\theta')} v_j.
\]
Rewriting (10) in terms of $v_j$ instead of $\tilde{v}$
yields
\begin{eqnarray*}
&&\log p_\theta (\tilde{v}) = -\frac{1}{2}\left\{ \log \frac{1}{d} \sum_{j=1}^d f_j(\theta') f_j(\theta)^{-1} v_j^2 \right. \\
&&-
\left. \log 
\frac{1}{d} \sum_{j=1}^d f_j(\theta') v_j^2 \right\} + \log \det R_\theta.
\end{eqnarray*}
Due to Chebychev's inequality we have 
\[
\left| \frac{1}{d} \sum_{j=1}^d f_j(\theta') f_j(\theta)^{-1} v_j^2
- \tau (R_{\theta'} R^{-1}_{\theta}) \right| \leq \delta,
\]
with probability $1-  \frac{1}{d^2} \sum_{j=1}^d f_j(\theta')^2
f_j(\theta)^{-2}/\delta^2= 1- \frac{1}{d} \tau(R^2_{\theta'} R_{\theta}^{-2})/\delta^2 $.  Likewise, 
\[
\left| \frac{1}{d} \sum_{j
=1}^d f_j(\theta') v_j^2
- {\rm tr} (R_{\theta'}) \right| \leq \delta,
\]
with probability $1- \frac{1}{d} (\tau(R_{\theta'}^2)/\delta^2$.  
To ensure that 
\begin{equation}\label{eq:ineqChev1}
\left| \log \frac{1}{d} \sum_{j=1}^d f_j(\theta') f_j(\theta)^{-1} v_j^2 - \log \tau (R_{\theta'} R_{\theta}^{-1})\right|\leq \epsilon,
\end{equation}
we need to ensure that $\delta \leq \epsilon/ \tau (R_{\theta'} R_{\theta}^{-1}) -\delta$ which can be achieved by
$\delta \leq \epsilon \tau (R_{\theta'} R_{\theta}^{-1})/2$ for sufficiently small $\epsilon$. 
  Likewise, we can achieve that 
\begin{equation}\label{eq:ineqChev2}
\left| \log \frac{1}{d} \sum_{j=1}^d f_j(\theta') v_j^2 - \log \tau (R_{\theta'})\right|\leq \epsilon,
\end{equation}
if $\delta \leq \epsilon \tau (R_{\theta'})/2$. 
Thus, both inequalities \eqref{eq:ineqChev1} and \eqref{eq:ineqChev2} together  hold with probability 
at least 
\[
1 - \frac{4}{d \epsilon^2} \left( \frac{\tau (R_{\theta'}^2 R_{\theta}^{-2})}{\tau (R_{\theta'} R_{\theta})^2} 
+  \frac{\tau (R_{\theta'}^{2})}{\tau (R_{\theta'})^2} \right). 
\]


\end{document}